\documentclass[11pt]{article}

\usepackage[margin=2.5cm]{geometry}
\usepackage[utf8]{inputenc}
\usepackage{amsmath,amssymb,bm,amsthm}
\numberwithin{equation}{section}
\usepackage{booktabs,multirow,longtable}
\usepackage{float}
\usepackage[section]{placeins}
\usepackage{graphicx}
\usepackage[font=small,labelfont=bf]{caption}
\usepackage{subcaption}
\usepackage{xcolor}
\usepackage{enumitem}
\usepackage{microtype}
\usepackage[authoryear,round]{natbib}
\usepackage{tikz}
\usetikzlibrary{positioning,arrows.meta,fit,backgrounds,calc}
\usepackage[colorlinks=true,linkcolor=blue!60!black,citecolor=blue!60!black,urlcolor=blue!60!black]{hyperref}

\newcommand{\AutoScaleVersionPath}{}
\IfFileExists{revised/revised.tex}{\renewcommand{\AutoScaleVersionPath}{revised/}}{}
\graphicspath{{\AutoScaleVersionPath}}
\let\AutoScaleBibliography\bibliography
\renewcommand{\bibliography}[1]{\AutoScaleBibliography{\AutoScaleVersionPath#1}}

\newtheorem{theorem}{Theorem}[section]
\newtheorem{proposition}[theorem]{Proposition}
\newtheorem{lemma}[theorem]{Lemma}
\theoremstyle{remark}
\newtheorem{remark}[theorem]{Remark}
\theoremstyle{plain}

\newcommand{\method}{SWIM}
\newcommand{\scaleall}{SCALE-All}
\newcommand{\lewm}{LeWM}
\newcommand{\dinowm}{DINO-WM}
\newcommand{\vz}{\mathbf{z}}
\newcommand{\vq}{\mathbf{q}}

\newcommand{\vg}{\mathbf{g}}
\newcommand{\va}{\mathbf{a}}

\newcommand{\dq}{\boldsymbol{\delta}}
\newcommand{\Lc}[1]{\mathcal{L}_{\mathrm{#1}}}
\newcommand{\R}{\mathbb{R}}

\setlist{nosep,leftmargin=1.4em}
\providecommand{\showchanges}{0}
\newcommand{\rev}[1]{\ifnum\showchanges=1\textcolor{red}{#1}\else#1\fi}
\newenvironment{revision}{\begingroup\ifnum\showchanges=1\color{red}\fi}{\par\endgroup}
\title{\rev{\textbf{SWIM: Learning What Matters from Behaviors for JEPA World Models}}}

\author{\normalsize
Jiaming Hu\textsuperscript{1,2,*}, Shi Bo\textsuperscript{1,2,*},
Yan Zheng\textsuperscript{2}, Tian Wang\textsuperscript{2} \\[3pt]
\normalsize Florian Dubost\textsuperscript{2}, Alejandro Mottini\textsuperscript{2},
Junze Liu\textsuperscript{2}, Arvind Srinivasan\textsuperscript{2} \\[3pt]
\normalsize Kai Zhong\textsuperscript{2}, Kun Qian\textsuperscript{2},
Sharon Gao\textsuperscript{2}, Vianne Gao\textsuperscript{2}, Houyu Zhang\textsuperscript{2}, Qingjun Cui\textsuperscript{2} 
\\[6pt]
\normalsize \textsuperscript{1}Boston University \qquad
\textsuperscript{2}Unity Technologies
}
\date{\normalsize\today}
\newcommand{\blfootnote}[1]{%
  \begingroup\renewcommand\thefootnote{}\footnote{#1}%
  \addtocounter{footnote}{-1}\endgroup}

\begin{document}
\maketitle
\blfootnote{\textsuperscript{*}Equal contribution.
Correspondence to \texttt{jh7453@bu.edu}.
This work was completed while Jiaming Hu and Shi Bo were interns at Unity Technologies.}

\begin{abstract}
\begin{revision}
Joint-embedding predictive architectures (JEPAs) enable world models to plan from images
by predicting future embeddings. However, their visual prediction objectives do not explicitly
incorporate task-relevant state information, so the physical state that determines task progress
may not be reliably captured in the latent space. Supervising world models with simulator state can make planning
more informative, but it requires deciding which state features matter. Features are typically selected by hand or weighted uniformly, and a uniform weighting gives every standardized state feature equal influence, so
redundant coordinates can overcount the same variation, while irrelevant ones can obscure
task-relevant differences. To address this limitation, we propose SWIM (State Weighting with Inverse-Dynamics Model), which learns state weights from offline trajectories by predicting recorded actions from current states and observed future displacements. The learned weights quantify each feature's contribution to action prediction and define a weighted state metric for state-supervised training, which we demonstrate by aligning latent and state distances. The inverse-dynamics model is used only during training, so the
resulting world model plans from images with no additional planning-time computation. Across
five simulated robotic manipulation tasks, SWIM consistently improves planning success over
uniform state weighting.
\end{revision}
\end{abstract}

\newpage
\tableofcontents
\newpage

\section{Introduction}
\label{sec:introduction}
\begin{revision}
World models support planning by predicting the consequences of candidate actions before execution~\citep{ha2018world,hafner2019planet,chua2018pets}. Joint-embedding predictive architectures (JEPAs) perform this prediction in latent space without reconstructing future observations~\citep{lecun2022path,assran2023ijepa,zhou2025dinowm}. For example, LeWorldModel (LeWM) jointly learns an image encoder and action-conditioned latent dynamics through a next-embedding prediction objective and anti-collapse regularization~\citep{maes2026leworldmodel}. At planning time, candidate action sequences are evaluated using the Euclidean distance between predicted terminal and goal embeddings. However, the prediction objective does not explicitly require the latent space to represent the physical state that determines task progress. Although probing shows that JEPA latents encode physical quantities~\citep{maes2026leworldmodel}, forward prediction alone does not ensure that task-relevant physical state is reliably identifiable or well separated in latent space, which can limit downstream planning~\citep{yan2026psgjepa,zeng2026phylatent}.

This has motivated supervising world models with the simulator states recorded alongside offline data, which provide a direct signal of task progress: TC-WM aligns compact visual representations with proprioceptive state~\citep{fu2026parsimoniouslatentslearningtaskcentric}, PhyLatent and PSG-JEPA ground latent representations in physical state~\citep{zeng2026phylatent,yan2026psgjepa}, and SCALE aligns latent distances with task-state distances during world-model training~\citep{hu2026scale}. These methods take the supervising state as given, so their effectiveness depends on how the state is selected and weighted. Although state variables contain information relevant to planning, different coordinates may encode overlapping information, while others may be irrelevant to the task. Assigning equal weights to all coordinates can therefore overemphasize redundant information and introduce irrelevant variation into the state supervision, while selecting coordinates by hand requires task-specific expertise. This motivates learning the relative contributions of state features rather than treating all coordinates equally.

A key question is how to determine these weights. Prior action-based objectives, including inverse-dynamics models (which infer the action that produced an observed state change) and PLDM~\citep{agrawal2016poke,pathak2017curiosity,sobal2025pldm}, shape latent representations directly but do not address how individual state coordinates should be weighted. Since actions directly influence future state transitions and task outcomes, we leverage recorded actions to identify state changes that are informative about behavior: a coordinate whose change helps infer the action reflects the agent's behavior, whereas a redundant or unaffected coordinate adds little information. This provides a behavior-grounded surrogate for state relevance.

Building on this idea, we propose \method{} (State Weighting with Inverse-Dynamics Model), a two-stage framework for learning behavior-informed state supervision. First, a gated inverse-dynamics model learns continuous state weights by predicting recorded actions from current states and observed future displacements. Its training combines the log-likelihood of recorded actions, discrimination between matching and mismatched state changes, and regularization. The learned weights define a weighted state metric that can replace the hand-specified state in state-supervised training; we then demonstrate it by calibrating JEPA latent geometry through SCALE~\citep{hu2026scale}. This approach requires neither reward nor success labels. Because the inverse-dynamics model and simulator states are used only during training, the world model still plans from images without additional inference-time computation.

Our contributions are as follows: \textbf{(i)} We introduce an inverse-dynamics approach to automatically learn the relative importance of state features from offline trajectories. \textbf{(ii)} We demonstrate the learned state metric in SCALE, aligning JEPA latent distances with weighted state distances to better support downstream planning. \textbf{(iii)} We evaluate \method{} on five OGBench manipulation tasks~\citep{park2025ogbench} with the cross-entropy method (CEM) and iCEM planners over five training seeds, where it consistently improves over uniform state weighting , and we analyze how the learned weights respond to structured distractors.

The rest of the paper is organized as follows. Section~\ref{sec:related} reviews related work, and Section~\ref{sec:preliminaries} introduces latent-space planning and state-distance supervision. Section~\ref{sec:method} presents \method{}, followed by the experiments in Section~\ref{sec:experiments}. Section~\ref{sec:discussion} discusses limitations and concludes. The appendices provide implementation and evaluation details, an analysis of the state transformation and learned weights, and additional experimental results.
\end{revision}

\section{Related Work}
\label{sec:related}
\begin{revision}
\paragraph{Latent World Models and Planning.}
World models support control by learning policies from imagined experience~\citep{ha2018world,hafner2020dreamer} or optimizing action sequences through model predictions~\citep{chua2018pets,hafner2019planet,hansen2024tdmpc2}. Joint-embedding predictive architectures (JEPAs) learn representations through latent prediction without reconstructing observations~\citep{lecun2022path,assran2023ijepa}. DINO-WM builds on pretrained visual features~\citep{zhou2025dinowm}, whereas PLDM and LeWorldModel (LeWM) jointly learn visual representations and latent dynamics~\citep{sobal2025pldm,maes2026leworldmodel}. Their prediction objectives do not specify which task-relevant information the representation should capture, which motivates the state supervision discussed next.

\paragraph{State Supervision and Planning-Oriented Representations.}
Several approaches introduce additional supervision to improve the planning relevance of learned representations. TC-WM learns a compact latent space with a subspace contrastively aligned to the agent's physical state~\citep{fu2026parsimoniouslatentslearningtaskcentric}, while PhyLatent and PSG-JEPA incorporate physical-state grounding and dynamics-related objectives~\citep{zeng2026phylatent,yan2026psgjepa}. Other methods use value guidance, temporal distance, and reachability information to make latent representations more suitable for planning~\citep{destrade2025valuejepa,bai2026tdjepa,li2026rcaux}. Decision-Metric Alignment further studies whether latent distances correctly rank candidate action sequences~\citep{wang2026decisionmetric}. SCALE directly correlates pairwise latent distances with task-state distances~\citep{hu2026scale}. TC-WM, PhyLatent, PSG-JEPA, and SCALE all rely on a fixed choice of state coordinates, selected by hand or weighted uniformly; \method{} instead learns how much each state coordinate should contribute, and we demonstrate it with SCALE.

\paragraph{Inverse Dynamics and Behavioral Relevance.}
Inverse dynamics has been widely used to learn action-relevant representations from observed transitions~\citep{agrawal2016poke,pathak2017curiosity}. More recent approaches incorporate action-based objectives into latent world-model training, including PLDM, DA-LeWM, AC-MTM, and LeWAM~\citep{sobal2025pldm,wang2026decisionmetric,boylan2026nogaussian,hegde2026lewam}. Related work on actionable representations and bisimulation connects state similarity to behavior and task-related signals~\citep{ghosh2019actionable,ferns2004bisim,zhang2021dbc}, while sparse regularization and differentiable feature selection provide mechanisms for learning feature weights~\citep{tibshirani1996lasso,louizos2018l0}. These methods shape latent representations directly, define state similarity through rewards or learned policies, or provide weighting mechanisms without a behavioral target; none learns per-coordinate weights over recorded state variables for state supervision. \method{} instead uses gated inverse dynamics to learn weights over recorded state variables, which then define the state supervision for world-model training.
\end{revision}

\section{Preliminaries}
\label{sec:preliminaries}
\begin{revision}
In this section, we review latent-space planning with LeWM and introduce SCALE, the state-supervised objective we use to demonstrate SWIM, which correlates latent distances with state distances. Throughout the paper, lowercase bold letters denote vectors and uppercase bold letters denote matrices; unbolded indexed letters denote scalar entries. For example, $\mathbf v=(v_1,\ldots,v_d)^\top\in\R^d$ has entries $v_k$, and $\mathbf M=(M_{ij})\in\R^{m\times n}$ has entries $M_{ij}$. For $\mathbf v\in\R^d$, we define $\lVert\mathbf v\rVert_1=\sum_{k=1}^{d}|v_k|$ and $\lVert\mathbf v\rVert_2=(\sum_{k=1}^{d}v_k^2)^{1/2}$.

\subsection{Latent-Space Planning with LeWM}
LeWorldModel (LeWM) jointly learns an image encoder and action-conditioned latent dynamics, enabling an agent to evaluate candidate action sequences by predicting their outcomes in latent space~\citep{maes2026leworldmodel}. We introduce its planning formulation and training objective below. Let $o_t$ denote an image at decision step $t$ and $\va_t$ the stacked vector of actions executed before the next decision step. An episode is one recorded environment trajectory, and a sub-trajectory is a contiguous segment of an episode. LeWM uses an encoder $E_\theta$ and an action-conditioned predictor $P_\psi$, with trainable parameters $\theta$ and $\psi$, respectively. Writing $d_z$ for the latent dimension, the model produces $\vz_t=E_\theta(o_t)\in\R^{d_z}$ and $\hat\vz_{t+1}=P_\psi(\vz_t,\va_t)$. We suppress the predictor's finite history in this notation.

At test time, $H$ is the planning horizon in decision steps. Given a current image $o_0$ and candidate actions $\va_{0:H-1}$, the model initializes $\hat\vz_0=E_\theta(o_0)$ and rolls out
\begin{equation}
  \hat\vz_{t+1}=P_\psi(\hat\vz_t,\va_t),\qquad t=0,\ldots,H-1.
  \label{eq:rollout}
\end{equation}
For a fixed initial observation and history, $\hat\vz_H(\va_{0:H-1})$ denotes the predicted terminal embedding under that action sequence. Given a goal image $o_g$ and its embedding $\vz_g=E_\theta(o_g)$, we define the planning cost function $J$, which assigns a scalar score to each candidate action sequence, as
\begin{equation}
  J(\va_{0:H-1})=\bigl\lVert\hat\vz_H(\va_{0:H-1})-\vz_g\bigr\rVert_2^2 .
  \label{eq:cost}
\end{equation}
The cross-entropy method (CEM) and its improved variant iCEM search for action sequences that minimize the planning cost $J$ in Equation~\eqref{eq:cost}, while the encoder and predictor remain fixed~\citep{deboer2005cem,pinneri2021icem}. Thus the solver performs the search, and the latent distance supplies the cost used to compare candidates.

LeWM learns the encoder and predictor from observed transitions. For a transition $(o_t,\va_t,o_{t+1})$, the one-step latent-prediction loss is
\begin{equation}
  \Lc{pred}=\lVert\hat\vz_{t+1}-\vz_{t+1}\rVert_2^2,
  \qquad \vz_{t+1}=E_\theta(o_{t+1}).
  \label{eq:prediction}
\end{equation}
This loss alone admits collapsed solutions in which every image has the same embedding. LeWM stacks a batch of encoder outputs as the rows of a matrix $\mathbf{Z}$ and applies SIGReg, which projects the embeddings onto random unit directions and penalizes deviations of those projected distributions from a standard Gaussian~\citep{balestriero2025lejepa,maes2026leworldmodel}. This regularizer maintains global spread. With nonnegative weight $\lambda_{\mathrm{sig}}$, the objective is
\begin{equation}
  \Lc{LeWM}=\Lc{pred}+\lambda_{\mathrm{sig}}\,\mathrm{SIGReg}(\mathbf{Z}).
  \label{eq:lewm}
\end{equation}
While these objectives support stable latent prediction, neither explicitly optimizes the representation geometry used to compare candidate action sequences in Equation~\eqref{eq:cost}.

\subsection{SCALE}
SCALE (State-CAlibrated Latent Embeddings) aims to make the planning cost more informative about task progress by encouraging latent distances to reflect differences in task-relevant state~\citep{hu2026scale}. Let $\vq\in\R^d$ contain selected state variables, such as an object's position and orientation, standardized componentwise using statistics computed once over all frames of the offline dataset, before the train/validation split. For $K$ sampled image-frame index pairs $(i_\ell,j_\ell)$, $\ell=1,\ldots,K$, latent and state distances are computed from the same pairs:
\begin{equation}
  x_\ell=\lVert\vz_{i_\ell}-\vz_{j_\ell}\rVert_2^2,\qquad
  y_\ell=\lVert\vq_{i_\ell}-\vq_{j_\ell}\rVert_2^2 .
  \label{eq:profiles}
\end{equation}
Writing $\mathbf x=(x_1,\ldots,x_K)^\top$ and $\mathbf y=(y_1,\ldots,y_K)^\top$ for the paired distance vectors, SCALE encourages positive correlation between them. For nonconstant $\mathbf x$ and $\mathbf y$, its correlation loss and complete training objective take the following form when numerical stabilization is omitted:
\begin{equation}
  \Lc{corr}=1-\rho_{\mathrm P}(\mathbf x,\mathbf y),\qquad
  \Lc{SCALE}=\Lc{pred}+\lambda_{\mathrm{sig}}\,\mathrm{SIGReg}(\mathbf{Z})
                   +\lambda_{\mathrm{corr}}\,\Lc{corr},
  \label{eq:scale}
\end{equation}
where $\rho_{\mathrm P}$ denotes Pearson correlation and the nonnegative coefficients weight the regularizers. Minimizing $\Lc{corr}$ encourages pairs with larger task-state differences to lie farther apart in latent space. 

We refer to SCALE applied to all supplied state coordinates with uniform weights as \scaleall{}; it is the main baseline in our experiments. SCALE uses state distances as fixed supervision targets for latent geometry. However, these targets are determined by the supplied state coordinates and their uniform weighting. Consequently, redundant or irrelevant coordinates can influence the learned geometry even when latent-state alignment is successful. This limitation motivates learning a weighted state metric before using it for representation calibration. In the next section, we introduce SWIM, which learns these weights from offline action information.

\end{revision}
\section{SWIM}
\label{sec:method}
\label{sec:swim}
\label{sec:meaning}

\begin{revision}
SWIM consists of two stages (Figure~\ref{fig:pipeline}). In Stage~1, we learn
state-feature weights from offline trajectories using gated inverse dynamics;
these weights define a state metric that does not depend on the downstream objective.
In Stage~2, we demonstrate this metric with SCALE, using it as the target for calibrating
the latent geometry of JEPA world models.
\end{revision}

\begin{figure}[!t]
  \centering
  \includegraphics[width=\textwidth]{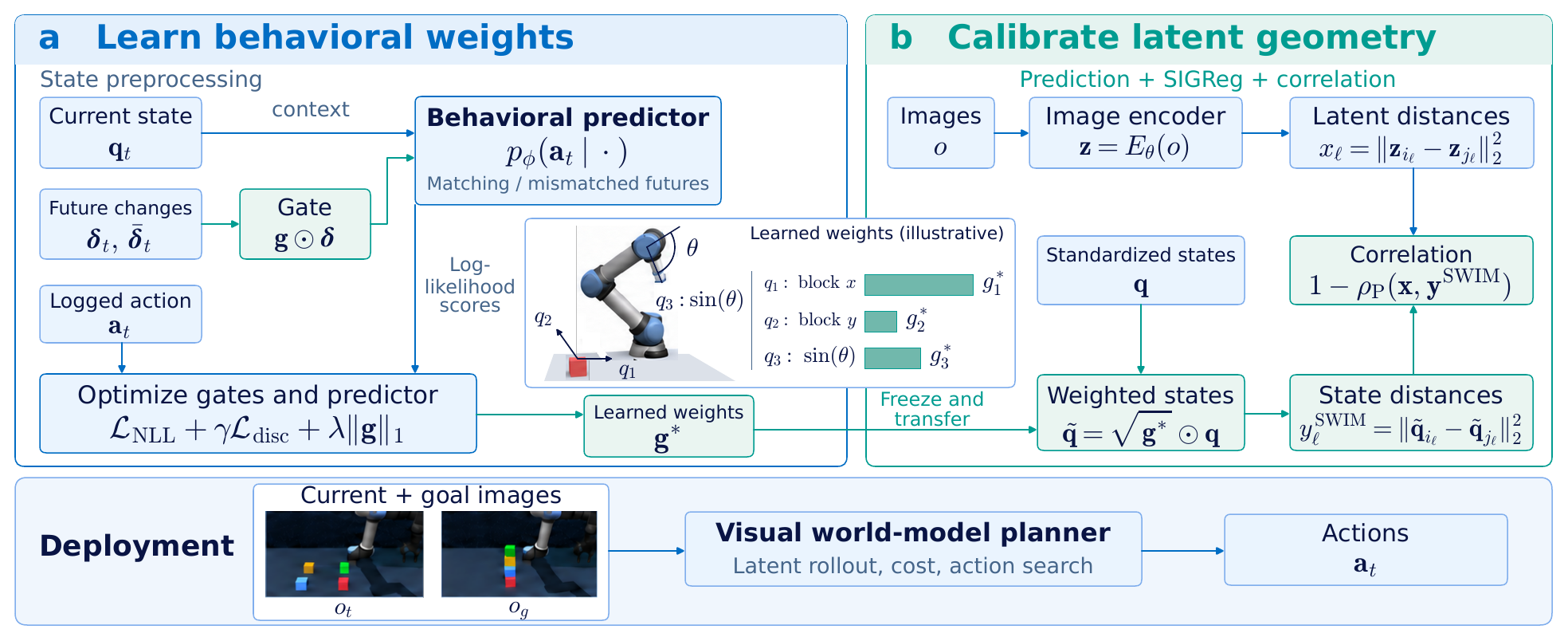}
  \caption{\rev{\textbf{\method.} (a)~The behavioral predictor learns coordinate weights from current states, gated future changes, and logged actions. (b)~Frozen weights rescale standardized states. Squared latent and state distances from the same frame pairs enter the correlation loss, with state distances held fixed; prediction and SIGReg losses are retained. The inset names physical features before standardization and shows illustrative weight bars; within this inset, $\theta$ denotes a joint angle. The training symbols are defined in Sections~\ref{sec:stage1}--\ref{sec:stage2}. Deployment uses image-based latent planning; the behavioral predictor and logged future states are training-only components.}}
  \label{fig:pipeline}
\end{figure}

\subsection{Learning State-Feature Gates via Inverse Dynamics}
\label{sec:swim_gates}
\label{sec:stage1}
\label{sec:swim_gate_learning}

\begin{revision}
Each Stage-1 example is drawn from one offline trajectory and
contains the state $\mathbf q_t\in\mathbb R^d$ at a decision
step $t$, the state $\mathbf q_{t+h}$ reached $h$ decision
steps later in the same trajectory, and the action chunk
$\mathbf a_t$ recorded at step $t$. The behavioral horizon
$h$ is sampled per example and is distinct from the planning horizon
$H$ (Appendix~\ref{app:stage1}).
As in SCALE, each state coordinate is standardized by its mean and
standard deviation over all frames of the offline dataset; Stage~1
additionally floors the standard deviation and clips extreme values
(Appendix~\ref{app:stage1}). Standardization places coordinates with
different physical units, such as positions and joint-angle features,
on a common scale, so each gate weights a coordinate relative to its
typical variation in the data rather than its units.

We introduce trainable gate logits
$\boldsymbol{\alpha}\in\mathbb R^d$ and define
$\mathbf g=\operatorname{sigmoid}(\boldsymbol{\alpha})
\in(0,1)^d$.
We define the observed displacement
$\boldsymbol\delta_t=\mathbf q_{t+h}-\mathbf q_t$.
A Gaussian multilayer perceptron (MLP) parameterized by $\phi$, which we call the behavioral predictor, predicts
the recorded action from the current state and a gated displacement
$\boldsymbol\delta$, and we write the resulting log-likelihood as
\begin{equation}
s_t(\boldsymbol\delta)
=\log p_\phi\bigl(\mathbf a_t\mid[\mathbf q_t;\,
\mathbf g\odot\boldsymbol\delta]\bigr),
\label{eq:swim_input}
\end{equation}
where $[\cdot\,;\cdot]$ denotes concatenation. The MLP outputs the
mean and per-component log standard deviations of the action chunk
(Appendix~\ref{app:stage1}).
The current state remains ungated, while each gate
scales the corresponding displacement coordinate.
A single gate vector is shared across trajectories
within an environment.

Fitting the recorded actions gives the negative log-likelihood (NLL)
\begin{equation}
\mathcal L_{\mathrm{NLL}}
=-\mathbb E\bigl[s_t(\boldsymbol\delta_t)\bigr],
\label{eq:swim_nll}
\end{equation}
where the expectation is over offline examples
$(\mathbf q_t,\mathbf q_{t+h},\mathbf a_t)$.
This objective alone does not guarantee that the predictor uses the
displacement, because actions can often be partly predicted from the
current state. If the predictor ignores the displacement, the NLL
provides no gradient to the gates, and the gate penalty introduced
below drives all gates toward zero regardless of behavioral relevance.
We therefore add a discrimination term that requires the observed
displacement $\boldsymbol\delta_t$ to explain the recorded action
better than mismatched displacements
$\bar{\boldsymbol\delta}_t=\bar{\mathbf q}-\mathbf q_t$, where
$\bar{\mathbf q}$ is the later state of another example in the same
training batch, and $\mathbf q_t$ and $\mathbf a_t$ are held fixed.
We consider two forms. The hinge loss uses a single mismatched
displacement:
\begin{equation}
\mathcal L_{\mathrm{hinge}}
=\mathbb E\bigl[\max\bigl(0,\,\kappa-s_t(\boldsymbol\delta_t)
+s_t(\bar{\boldsymbol\delta}_t)\bigr)\bigr],
\label{eq:swim_losses}
\end{equation}
where $\kappa>0$ is the margin. If the predictor ignores the displacement,
then $s_t(\boldsymbol\delta_t)=s_t(\bar{\boldsymbol\delta}_t)$ and the
loss remains $\kappa$. The InfoNCE loss~\citep{oord2018cpc} instead
contrasts the observed displacement with $N$ mismatched displacements
$\bar{\boldsymbol\delta}_{t,k}=\bar{\mathbf q}_k-\mathbf q_t$,
$k=1,\ldots,N$:
\begin{equation}
\mathcal L_{\mathrm{InfoNCE}}
=-\mathbb E\Biggl[
\log\frac{\exp\bigl(s_t(\boldsymbol\delta_t)/\tau\bigr)}
{\exp\bigl(s_t(\boldsymbol\delta_t)/\tau\bigr)
+\sum_{k=1}^{N}\exp\bigl(s_t(\bar{\boldsymbol\delta}_{t,k})/\tau\bigr)}
\Biggr],
\label{eq:infonce}
\end{equation}
where $\tau>0$ is a temperature. Appendix~\ref{app:stage1} specifies
how the mismatched later states are drawn from the batch.
The complete Stage-1 objective uses a discrimination weight
$\gamma\ge0$ and a gate-penalty coefficient $\lambda\ge0$:
\begin{equation}
\mathcal L_{\mathrm{Stage1}}(\boldsymbol\alpha,\phi)
=\mathcal L_{\mathrm{NLL}}
+\gamma\mathcal L_{\mathrm{disc}}
+\lambda\|\mathbf g\|_1,
\qquad
\mathcal L_{\mathrm{disc}}\in\{\mathcal L_{\mathrm{hinge}},\mathcal L_{\mathrm{InfoNCE}}\}.
\label{eq:swim_stage1}
\end{equation}
We jointly optimize $\boldsymbol\alpha$ and $\phi$; the $\ell_1$
penalty acts on $\mathbf g$ and encourages small gates without
hard feature selection. Sampling and preprocessing
details are in Appendix~\ref{app:stage1}.
\end{revision}

\subsection{Transferring Learned Gates to Latent Geometry}
\label{sec:swim_transfer}
\label{sec:problem}
\label{sec:stage2}
\label{sec:swim_calibration}

\begin{revision}
We freeze the learned gates $\mathbf g^*$ and apply them to the same
state coordinates after SCALE's standardization. The gates must enter
after standardization and must not be followed by a second
coordinate-wise standardization, which would undo them
(Appendix~\ref{app:metric}). Defining
$\widetilde{\mathbf q}=\sqrt{\mathbf g^*}\odot\mathbf q$
(with elementwise square root), we obtain the weighted squared
state distance
\begin{equation}
D_{\mathbf g^*}^2(\mathbf q_i,\mathbf q_j)
=\|\widetilde{\mathbf q}_i-\widetilde{\mathbf q}_j\|_2^2
=\sum_{k=1}^{d}g_k^*(q_{i,k}-q_{j,k})^2.
\label{eq:swim_distance}
\end{equation}

For the $K$ frame pairs $(i_\ell,j_\ell)$ used by SCALE
(Equation~\eqref{eq:profiles}), we replace SCALE's uniform state-distance target with
$y_\ell^{\mathrm{SWIM}}=
D_{\mathbf g^*}^2(\mathbf q_{i_\ell},\mathbf q_{j_\ell})$.
Let $\mathbf y^{\mathrm{SWIM}}$ collect these targets and
$\mathbf x$ the corresponding squared latent distances.
Substituting the new target into SCALE's correlation term gives
\begin{equation}
\mathcal L_{\mathrm{SWIM}}
=\mathcal L_{\mathrm{LeWM}}
+\lambda_{\mathrm{corr}}
\bigl[1-\rho_{\mathrm P}(\mathbf x,\mathbf y^{\mathrm{SWIM}})\bigr].
\label{eq:swim_stage2}
\end{equation}
Here $\mathcal L_{\mathrm{LeWM}}$ is the objective from Equation~\eqref{eq:lewm}.
Relative to \scaleall{}, only the state-distance targets change;
$\mathbf g^*$ remains fixed and the correlation loss trains the
image encoder. The inverse-dynamics model is discarded after Stage~1.
At deployment, planning uses the original image-based latent cost
and optimizer, without simulator states or additional computation.
\end{revision}

\subsection{Why Learned Weights Help Planning}
\label{sec:theory}

\begin{revision}
The state metric determines the geometry toward which the latent space is calibrated. Under uniform weighting, a redundant encoding of one quantity, such as a sine--cosine pair, is counted more than once, and coordinates unrelated to the task add variation to the target; even perfect latent--state alignment then carries these distortions into the planning cost. The following result makes this precise by separating the planning error into an alignment term, a rollout term, and a term that depends on the state weights.
\end{revision}

\begingroup
\color{black}
Following SCALE's planning analysis~\citep{hu2026scale}, fix a start--goal
pair and $M\ge2$ candidate action sequences. For candidate $m$, let
$\vq_H^{(m)}\in\R^d$ be its actual terminal state, $\vz_H^{(m)}$ the encoded
actual terminal observation, and $\widehat\vz_H^{(m)}$ its predicted terminal
embedding; $\vq_g$ and $\vz_g$ denote the goal state and goal embedding. Define
\[
J_m^{\mathrm{true}}=\lVert\vz_H^{(m)}-\vz_g\rVert_2^2,
\qquad
\widehat J_m=\lVert\widehat\vz_H^{(m)}-\vz_g\rVert_2^2,
\qquad
Q_m=D_{\vg}^2(\vq_H^{(m)},\vq_g),
\]
where $\widehat J_m$ is the planning cost in Equation~\eqref{eq:cost},
$\vg=\vg^*/\lVert\vg^*\rVert_1$ are the normalized learned gates, and the
subscript of $D^2$ indicates the weights used in
Equation~\eqref{eq:swim_distance}. Task progress is measured by a reference
metric with weights $\vg^0\ge0$, $\sum_k g_k^0=1$, on the same standardized
state coordinates. Let $\sigma_z$ and $\sigma_q$ be the empirical standard deviations of
$(J_m^{\mathrm{true}})_{m=1}^M$ and $(Q_m)_{m=1}^M$, respectively, using
divisor $M$, and let $\rho_{\mathrm P}$ be their Pearson correlation.

\begin{theorem}[Candidate selection with learned state weights]
\label{thm:swim-weighted-planning}
Suppose $\sigma_z,\sigma_q>0$, and let $C_z=\max_m\lVert\vz_H^{(m)}-\vz_g\rVert_2,\
C_{\mathrm{roll}}=\max_m\lVert\widehat\vz_H^{(m)}-\vz_H^{(m)}\rVert_2,
R=\max_{m,k}|q_{H,k}^{(m)}-q_{g,k}|.$ For every $\widehat m\in\arg\min_m\widehat J_m$, we have
\begin{align}
0\le{}&D_{\vg^0}^2(\vq_H^{(\widehat m)},\vq_g)
-\min_m D_{\vg^0}^2(\vq_H^{(m)},\vq_g)\notag\\
\le{}&2\sigma_q\sqrt{M(1-\rho_{\mathrm P})}
+2\frac{\sigma_q}{\sigma_z}\bigl(2C_zC_{\mathrm{roll}}+C_{\mathrm{roll}}^2\bigr)
+R^2\lVert\vg-\vg^0\rVert_1.
\label{eq:swim-weighted-planning}
\end{align}
\end{theorem}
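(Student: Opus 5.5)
The plan is to write the regret as a sum of three pieces and bound each separately: a state-reweighting error, a rollout error, and an alignment error. Let $m^\star\in\arg\min_m D_{\vg^0}^2(\vq_H^{(m)},\vq_g)$. The lower bound $0\le\cdot$ is immediate from the definition of $m^\star$. For the upper bound, write $P_m=D_{\vg^0}^2(\vq_H^{(m)},\vq_g)$ and $\Delta_{m,k}=(q_{H,k}^{(m)}-q_{g,k})^2\in[0,R^2]$. Then
\[
P_{\widehat m}-P_{m^\star}
=\bigl(Q_{\widehat m}-Q_{m^\star}\bigr)
+\sum_{k}(g^0_k-g_k)\bigl(\Delta_{\widehat m,k}-\Delta_{m^\star,k}\bigr).
\]
Since $|\Delta_{\widehat m,k}-\Delta_{m^\star,k}|\le R^2$, the second sum is at most $R^2\lVert\vg-\vg^0\rVert_1$. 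This yields the third term of \eqref{eq:swim-weighted-planning}. It is important to bound the \emph{difference} of the two reweighting errors directly rather than each one separately, since the latter would lose a factor of $2$.

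For the rollout term, note that $\lVert\mathbf a+\mathbf e\rVert_2^2-\lVert\mathbf a\rVert_2^2=2\mathbf a^\top\mathbf e+\lVert\mathbf e\rVert_2^2$. Taking $\mathbf a=\vz_H^{(m)}-\vz_g$ and $\mathbf e=\widehat\vz_H^{(m)}-\vz_H^{(m)}$ gives $|\widehat J_m-J_m^{\mathrm{true}}|\le\varepsilon:=2C_zC_{\mathrm{roll}}+C_{\mathrm{roll}}^2$ for every $m$. Because $\widehat m$ minimizes $\widehat J$, for every $m$ we have
\[
J^{\mathrm{true}}_{\widehat m}\le\widehat J_{\widehat m}+\varepsilon\le\widehat J_m+\varepsilon\le J^{\mathrm{true}}_m+2\varepsilon .
\]

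For the alignment term, I will standardize. Set $u_m=(J^{\mathrm{true}}_m-\bar J)/\sigma_z$ and $v_m=(Q_m-\bar Q)/\sigma_q$, with divisor $M$ so that $\sum_m u_m^2=\sum_m v_m^2=M$ and $\sum_m u_mv_m=M\rho_{\mathrm P}$. Then $e_m=v_m-u_m$ satisfies $\sum_m e_m^2=2M(1-\rho_{\mathrm P})$. For any two indices $a\ne b$, applying $(e_a-e_b)^2\le2(e_a^2+e_b^2)\le2\sum_m e_m^2$ gives $|e_a-e_b|\le2\sqrt{M(1-\rho_{\mathrm P})}$. Hence
\[
Q_{\widehat m}-Q_{m^\star}=\sigma_q\bigl[(e_{\widehat m}-e_{m^\star})+(u_{\widehat m}-u_{m^\star})\bigr]
\le2\sigma_q\sqrt{M(1-\rho_{\mathrm P})}+\frac{\sigma_q}{\sigma_z}\,2\varepsilon,
\]
where the last step uses the rollout inequality with $m=m^\star$. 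If $\widehat m=m^\star$ the bound is trivial.

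Summing the three pieces gives \eqref{eq:swim-weighted-planning}. I expect the only delicate point to be the constant bookkeeping: the factor $2\sqrt{M(1-\rho_{\mathrm P})}$ requires the pairwise inequality above rather than a crude per-index bound (which would give $2\sqrt{2M(1-\rho_{\mathrm P})}$), and the reweighting term requires the difference trick noted earlier. The rest is direct.
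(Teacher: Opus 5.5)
Your proof is correct and takes essentially the same route as the paper. It uses the same standardized residuals $e_m=v_m-u_m$ with the pairwise bound $|e_i-e_j|\le 2\sqrt{M(1-\rho_{\mathrm P})}$, the same rollout inequality $|\widehat J_m-J_m^{\mathrm{true}}|\le 2C_zC_{\mathrm{roll}}+C_{\mathrm{roll}}^2$, and the same coordinatewise bound on the difference of the reweighting errors. The differences are cosmetic: you obtain the rollout inequality by expanding $\lVert\mathbf a+\mathbf e\rVert_2^2$ rather than by the paper's difference-of-squares factorization, and you assemble the three pieces in a different order.
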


The three terms in Equation~\eqref{eq:swim-weighted-planning} correspond to
imperfect alignment with the learned metric, latent rollout error, and
discrepancy from the reference task weights. The last term vanishes when
$\vg=\vg^0$. \scaleall{} corresponds to uniform weights
$\vg^{\mathrm u}=(1/d,\ldots,1/d)$, for which the bound retains the term
$R^2\lVert\vg^{\mathrm u}-\vg^0\rVert_1$ however well the latent space is aligned.
Learned weights that move toward the reference metric reduce this term;
Proposition~\ref{prop:swim-weight-advantage} and
Remark~\ref{rem:swim-scale-bound} in
Appendix~\ref{app:swim-weighted-planning-proof} make this comparison precise.

\par
\endgroup

\section{Experiments}
\label{sec:experiments}
\subsection{Setup}
\label{sec:setup}
\begin{revision}
\paragraph{Environments and Comparisons.}
We evaluate five OGBench visual manipulation tasks~\citep{park2025ogbench}: Cube, Cube-Double, Cube-Triple, Cube-Quadruple, and Scene. Their simulator states expose structured task variables, including object poses, end-effector and joint configurations, and articulated scene elements, together with redundant encodings such as sine--cosine pairs, and the Cube tasks increase the state dimension from 22 to 37 as objects are added. This makes them well suited for studying state weighting. All candidate state coordinates are supplied to the gate together, so feature construction is fixed while relevance weighting is learned. We compare LeWM, LeWM with \scaleall, and two \method{} variants, SWIM-H and SWIM-N, trained with hinge and InfoNCE discrimination, respectively. We also evaluate \dinowm{}, which plans with pretrained visual features. All methods use the same offline data, planning protocol, and planning instances.

\paragraph{Evaluation.}
We evaluate with CEM and iCEM. Each \lewm{}-based method is trained with five seeds, and each task, configuration, and planner is evaluated on 600 episodes per training seed. Unless stated otherwise, SWIM-H uses $\lambda=0.03$ and SWIM-N uses $\lambda=0.1$. Table~\ref{tab:main} reports CEM results and Table~\ref{tab:icem} reports iCEM results. For significance, we average success over all tasks and both planners for each training seed and compare with \scaleall{} using a paired $t$-test over the five seeds. Planner parameters, episode allocation, and goal sampling are detailed in Appendix~\ref{app:planning}. The appendices provide gate-learning and feature-weight analyses (Appendices~\ref{app:stage1}--\ref{app:gate_values}), structured-distractor diagnostics (Appendix~\ref{app:contamination}), and objective and data-efficiency ablations (Appendix~\ref{app:ablations}).
\end{revision}

\subsection{Does Behavioral Weighting Improve Planning?}
\label{sec:main}
\begin{table}[!htbp]
  \normalcolor
  \centering
  \footnotesize
  \setlength{\tabcolsep}{3.5pt}
  \begin{tabular}{lrrrrrrr}
    \toprule
    \textbf{Task} & \textbf{\dinowm$^\dagger$} & \textbf{\lewm} & \textbf{\scaleall} & \textbf{SWIM-H$^\star$} & \textbf{SWIM-N$^\star$} & $\Delta_{\text{\scaleall}}$ & $\Delta_{\text{\lewm}}$ \\
    \midrule
    Cube           & 72.5 & 67.7{\scriptsize$\pm$1.0} & 74.6{\scriptsize$\pm$1.1} & \textbf{75.9}{\scriptsize$\pm$1.5} {\scriptsize(.01)} & \underline{75.8}{\scriptsize$\pm$1.2} {\scriptsize(.03)} & +1.3 & +8.3 \\
    Cube-Double    & 17.5 & 16.4{\scriptsize$\pm$1.1} & 18.2{\scriptsize$\pm$0.8} & \textbf{19.4}{\scriptsize$\pm$1.1} {\scriptsize(1.0)} & \underline{19.3}{\scriptsize$\pm$0.7} {\scriptsize(1.0)} & +1.1 & +3.0 \\
    Cube-Triple    & \textbf{34.2} & 30.6{\scriptsize$\pm$0.8} & 31.3{\scriptsize$\pm$1.2} & \underline{32.6}{\scriptsize$\pm$1.0} {\scriptsize(.05)} & 32.4{\scriptsize$\pm$0.5} {\scriptsize(.03)} & +1.3 & +2.0 \\
    Cube-Quadruple & 30.2 & 28.3{\scriptsize$\pm$1.0} & 29.6{\scriptsize$\pm$0.7} & \textbf{30.7}{\scriptsize$\pm$0.9} {\scriptsize(.03)} & \underline{30.5}{\scriptsize$\pm$0.6} {\scriptsize(1.0)} & +1.0 & +2.3 \\
    Scene          & 22.0 & 30.5{\scriptsize$\pm$0.5} & 33.4{\scriptsize$\pm$1.9} & \underline{37.3}{\scriptsize$\pm$3.3} {\scriptsize(.1)} & \textbf{39.4}{\scriptsize$\pm$2.3} {\scriptsize(.1)} & +6.0 & +8.9 \\
    \bottomrule
  \end{tabular}
  \caption{\rev{\textbf{CEM planning success (\%) over five training seeds} (mean $\pm$ standard deviation). SWIM-H$^\star$ and SWIM-N$^\star$ use the best $\lambda$ for each task, shown in parentheses. \dinowm$^\dagger$ uses pretrained visual features. The $\Delta$ columns compare the better SWIM variant with \scaleall{} and \lewm. Bold: best; underline: second best.}}
  \label{tab:main}
\end{table}
\begin{revision}
\method{} outperforms \lewm{} and \scaleall{} on all five tasks (Table~\ref{tab:main}), improving over \scaleall{} by 1.0--6.0 percentage points and over \lewm{} by 2.0--8.9 points, with the largest gain on Scene. With the default configurations, SWIM-H and SWIM-N improve over \scaleall{} by 0.9 points ($p=0.009$) and 1.0 points ($p=0.022$), respectively, and averaging over all ten configurations gives 0.7 points ($p=0.006$); in each case, \method{} is better on all five training seeds. \dinowm{} remains stronger on Cube-Triple. SWIM-H$^\star$ is higher on four tasks and SWIM-N$^\star$ on Scene, but the two objectives perform comparably overall, so the choice of discrimination loss is not critical.
\end{revision}

\subsection{What Happens When Irrelevant Variables Are Supplied?}
\label{sec:contam}
\begin{figure}[t]
  \centering
  \includegraphics[width=\textwidth]{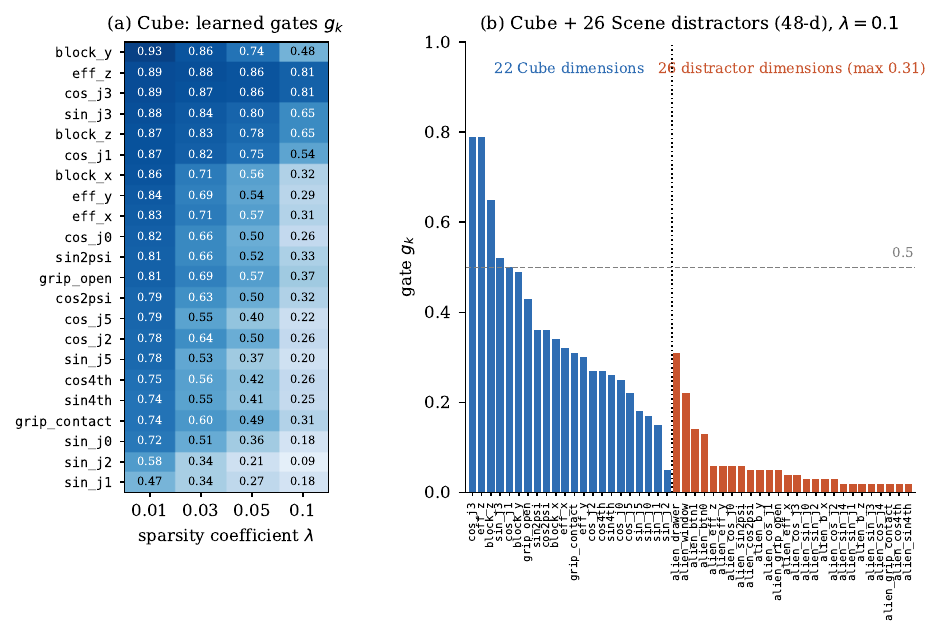}
  \caption{\textbf{Learned gates on Cube (InfoNCE with $\lVert\vg\rVert_1$ regularization).} (a)~Gate values for the 22 candidate state dimensions as the sparsity coefficient $\lambda$ increases. End-effector height (\texttt{eff\_z}), block height (\texttt{block\_z}), and the arm joints (\texttt{cos\_j1}, \texttt{cos\_j3}, \texttt{sin\_j3}) are retained longest, while \texttt{sin\_j1} and \texttt{sin\_j2} are suppressed first. (b)~Gates learned at $\lambda=0.1$ when 26 structured variables from unrelated Scene trajectories are appended to the Cube state (48 candidate dimensions). Every distractor receives a gate of at most 0.31, and the five Cube dimensions at or above 0.5 are exactly the five retained at $\lambda=0.1$ on the clean 22-dimensional state.}
  \label{fig:gates}
\end{figure}
\begin{revision}
\method{} assigns low weights to irrelevant state variables while preserving the ranking of the original coordinates, and largely recovers the planning success lost to them. We first examine the weights on the clean Cube state. Figure~\ref{fig:gates}(a) shows how regularization changes the Cube weights. At $\lambda=0.1$, five coordinates have gates of at least 0.5: \texttt{eff\_z}, \texttt{cos\_j1}, \texttt{cos\_j3}, \texttt{sin\_j3}, and \texttt{block\_z}. Relative rankings can change with regularization: for example, \texttt{block\_y} starts highest but falls below 0.5 at the strongest displayed penalty.

To test contamination, we append 26 variables from unrelated Scene trajectories to the 22-dimensional Cube vector. The ranking of the original coordinates remains similar to the clean-state ranking across the reported penalties (Table~\ref{tab:stability}); Figure~\ref{fig:gates}(b) displays all weights in the augmented state at $\lambda=0.1$. At $\lambda=0.1$, their Spearman correlation is 0.995, and all appended coordinates have gates at most 0.31.

The distractors also affect planning (Table~\ref{tab:contam_plan}, single training seed). Uniform alignment to the contaminated state lowers CEM success from 75.5 to 74.2, and weighted alignment at $\lambda=0.05$ recovers 75.2. Under iCEM, weighted alignment reaches 76.7, above both contaminated (75.5) and clean (76.3) uniform alignment.
\end{revision}
\section{Discussion and Conclusion}
\label{sec:discussion}
\label{sec:conclusion}
\begin{revision}
In this paper, we propose SWIM, a behavior-informed framework that automatically identifies and weights useful state information for JEPA planning from offline trajectories. Rather than requiring human priors to preselect which candidate state variables are important for a task, SWIM uses recorded actions to learn their relative behavioral relevance, and we demonstrate the resulting weights in SCALE's state-distance calibration. Across five OGBench manipulation tasks and two planners, SWIM consistently improves over uniform state weighting (+0.9--1.0 percentage points, $p<0.05$), while the structured-distractor experiment shows that it downweights unrelated variables appended to the state. These results suggest that learning the state metric from behavior can reduce reliance on manually designed task-state representations, particularly when the relevant state information is difficult to identify a priori. SWIM depends on the state variables supplied during training and on the behaviors covered by the offline data, and we have demonstrated it through SCALE. Because the learned metric does not depend on the downstream objective, applying it to other state-supervised objectives, such as subspace alignment or physical-state grounding, is a natural next step. We also expect larger gains when the supplied state contains many more redundant or noisy coordinates, and future work should explore richer, state-dependent weighting mechanisms.
\end{revision}
\clearpage
\bibliographystyle{apalike}
\bibliography{refs}
\clearpage
\appendix
\section{Training Details and the Behavioral Objective}
\label{app:stage1}
\begin{revision}
In Appendices~\ref{app:stage1}--\ref{app:ablations}, gate values refer to
the original Stage-1 sigmoid gates, before the normalization used in the
planning analysis.

\paragraph{Data and Time Indexing.}
Stage~1 uses simulator states and actions from the same offline trajectories as Stage~2. Images, rewards, and success labels do not enter the behavioral predictor. Let $r$ denote a sampled raw-frame start and $\va_s^{\mathrm{raw}}$ the action at raw frame $s$. With frame skip $c=5$, the target action chunk is $\va_r=[(\va_r^{\mathrm{raw}})^\top,\ldots,(\va_{r+4}^{\mathrm{raw}})^\top]^\top$. A behavioral horizon $h\in\{1,2,4,8\}$ spans $ch$ raw frames. Starting frames are sampled uniformly over all raw frames whose $r+45$ frame lies in the same episode, irrespective of the sampled $h$; the fixed validity filter uses the maximum offset $8\cdot5+5=45$ raw frames. Horizon $h$ is distinct from the planner's rollout horizon $H$.

\paragraph{State and Action Standardization.}
Stage~1 and Stage~2 compute separate per-coordinate state means and standard deviations, each once over all frames of the offline dataset, before the train/validation split. In Stage~1, standardized state coordinates use a standard-deviation floor of $10^{-3}$ and are clipped to $[-10,10]$; Stage~2 does not clip standardized states. Stage~1 displacements are direct differences of its once-standardized states, with no additional normalization. Stage~1 action targets are standardized per coordinate and clipped to $[-10,10]$, and the Gaussian negative log-likelihood is evaluated in this standardized action space.

\paragraph{Predictor and Initialization.}
The Gaussian predictor is an MLP with two hidden layers of 1024 units, each followed by layer normalization (LayerNorm) and a Gaussian error linear unit (GELU) activation. Its outputs are the mean and diagonal log-standard-deviation of the action chunk. The latter is clamped to $[-5,2]$. Gate logits are initialized at $\alpha_k=2$, giving $g_k\approx0.88$. All candidate coordinates start with positive weight; no hard threshold is applied to the learned vector.

\paragraph{In-Batch Discrimination.}
The mismatched later states $\bar{\vq}$ in Equations~\eqref{eq:swim_losses} and~\eqref{eq:infonce} are taken from other examples in the same batch, each at its own sampled horizon, while the current state $\vq_t$ and action $\va_t$ stay fixed. The examples in a batch are ordered cyclically. For the hinge loss, the mismatched later state of each example is that of the next example in this order. For InfoNCE, the $N=255$ mismatched later states $\bar{\vq}_1,\ldots,\bar{\vq}_N$ are those of the next $N$ examples, with batch size $B=1024$ and temperature $\tau=1$. Thus, the denominator contains the observed displacement and 255 alternatives, not every example in the batch. Neither construction verifies that a mismatch is unreachable or behaviorally incompatible, so correlated or similar examples can weaken the contrast.

\paragraph{Regularization Settings.}
The reported grid uses $\lVert\vg\rVert_1$ with $\lambda\in\{0.01,0.03,0.05,0.10,1.0\}$ for both discrimination objectives. The reference hinge configuration uses margin $\kappa=1$ and $\lambda=0.03$; InfoNCE uses temperature $\tau=1$ and $\lambda=0.10$.
All Stage-1 runs use $\gamma=1.0$. Optimization uses Adam with learning rate $10^{-3}$ for 8000 steps, gradient clipping at 1.0, and a fixed gate-training seed. The hinge batch size is 4096 and the InfoNCE batch size is 1024.

\paragraph{Stage-2 Interface.}
The learned gates are detached. The state distances are computed after applying $\sqrt{\vg^\ast}$ to standardized coordinates, with no subsequent coordinate-wise restandardization. Stage~2 uses $\lambda_{\mathrm{corr}}=0.1$ and $\lambda_{\mathrm{sig}}=0.09$. Each batch samples 4096 frame pairs: half within trajectory segments and half across episodes, with rejection sampling ensuring that the two episodes in each cross-episode pair are different. The latent encoder and predictor, prediction and SIGReg terms, optimizer, data, and number of epochs match the \scaleall{} arm for the task; the optional state-regression head is disabled.
\end{revision}

\section{Metric Transfer and Interpretation}
\label{app:metric}
\begin{revision}
\paragraph{Preprocessing Order.}
If $q_k$ has unit variance, $\sqrt{g_k^\ast}q_k$ has variance $g_k^\ast$, so a second coordinate-wise standardization would divide by $\sqrt{g_k^\ast}$ and restore uniform weighting. The weights must therefore enter after state standardization.

\paragraph{Scale of the Gates.}
Multiplying $\vg$ by $b\in(0,1)$ and dividing the predictor's first-layer weights on the gated displacement by $b$ leaves every score $s_t(\dq)$ in Equation~\eqref{eq:swim_input} unchanged while reducing the gate penalty, so Stage~1 does not identify an absolute scale of the gates. This does not affect Stage~2, because Pearson correlation is invariant to a common positive rescaling of the state distances; only the relative weights matter.
\end{revision}

\section{State-Gate Values Across Tasks}
\label{app:gate_values}
\begin{revision}
The following tables report Stage-1 gate values for Cube-Double, Cube-Triple, Cube-Quadruple, and Scene. Each table includes the full candidate state vector for $\ell_1$-regularized Hinge and InfoNCE models at four sparsity coefficients. These tables report continuous gate values without thresholding.

Feature names in these tables identify coordinates in the supplied state representation.
The prefixes \texttt{b0}, \texttt{b1}, and so on identify cube slots,
with suffixes \texttt{x}, \texttt{y}, and \texttt{z} denoting position
coordinates; \texttt{eff} denotes end-effector position.
The \texttt{sin\_j} and \texttt{cos\_j} fields encode joint angles, and
\texttt{grip} fields describe the gripper.
The \texttt{sin2psi}, \texttt{cos2psi}, \texttt{sin4th}, and
\texttt{cos4th} fields are trigonometric orientation features.
The Scene fields \texttt{drawer}, \texttt{window}, and \texttt{btn}
refer to the corresponding objects and buttons.

For the multi-cube tasks, corresponding coordinates of different cube slots often receive closely matched gates. For example, at $\lambda=0.01$, InfoNCE assigns \texttt{b0\_y}, \texttt{b1\_y}, and \texttt{b2\_y} gates of 0.95 on Cube-Triple and all four corresponding \texttt{b0\_y}--\texttt{b3\_y} coordinates gates of 0.95 on Cube-Quadruple; their corresponding $x$ and $z$ coordinates are also closely grouped. This suggests that the learned weighting reflects shared coordinate roles across repeated objects, rather than arbitrary slot-specific preferences, while not implying exact equality at every coefficient.
\end{revision}

\begingroup
\normalcolor
\scriptsize
\setlength{\tabcolsep}{3pt}
\setlength{\LTcapwidth}{\textwidth}
\begin{longtable}{lrrrrrrrr}
\caption{\rev{\textbf{Stage-1 gate values on Cube-Double (27 dimensions).} Values are continuous gates learned with $\ell_1$ regularization and the two discrimination objectives across the displayed $\lambda$ values.}}
\label{tab:gates_cube_double}\\
    \toprule
    \textbf{Feature} & \multicolumn{4}{c}{\textbf{$\ell_1$ + Hinge}} & \multicolumn{4}{c}{\textbf{$\ell_1$ + InfoNCE}} \\
     & $\lambda=0.01$ & $0.03$ & $0.05$ & $0.10$ & $\lambda=0.01$ & $0.03$ & $0.05$ & $0.10$ \\
    \midrule
\endfirsthead
\multicolumn{9}{c}{\tablename~\thetable{} (continued)}\\
    \toprule
    \textbf{Feature} & \multicolumn{4}{c}{\textbf{$\ell_1$ + Hinge}} & \multicolumn{4}{c}{\textbf{$\ell_1$ + InfoNCE}} \\
     & $\lambda=0.01$ & $0.03$ & $0.05$ & $0.10$ & $\lambda=0.01$ & $0.03$ & $0.05$ & $0.10$ \\
    \midrule
\endhead
\midrule
\multicolumn{9}{r}{Continued on next page}\\
\endfoot
\bottomrule
\endlastfoot
    \texttt{b0\_y} & 0.94 & 0.87 & 0.76 & 0.51 & 0.95 & 0.91 & 0.86 & 0.66 \\
    \texttt{b1\_y} & 0.94 & 0.86 & 0.75 & 0.50 & 0.95 & 0.92 & 0.87 & 0.67 \\
    \texttt{b0\_x} & 0.92 & 0.79 & 0.63 & 0.39 & 0.93 & 0.88 & 0.79 & 0.55 \\
    \texttt{b1\_x} & 0.91 & 0.78 & 0.62 & 0.39 & 0.93 & 0.88 & 0.79 & 0.56 \\
    \texttt{eff\_z} & 0.88 & 0.84 & 0.80 & 0.68 & 0.88 & 0.85 & 0.81 & 0.71 \\
    \texttt{eff\_y} & 0.88 & 0.80 & 0.70 & 0.49 & 0.89 & 0.82 & 0.75 & 0.56 \\
    \texttt{cos\_j3} & 0.87 & 0.84 & 0.79 & 0.67 & 0.87 & 0.85 & 0.81 & 0.72 \\
    \texttt{grip\_open} & 0.84 & 0.77 & 0.70 & 0.53 & 0.84 & 0.78 & 0.71 & 0.55 \\
    \texttt{sin\_j3} & 0.84 & 0.76 & 0.65 & 0.45 & 0.84 & 0.78 & 0.71 & 0.53 \\
    \texttt{b1\_z} & 0.82 & 0.74 & 0.64 & 0.46 & 0.84 & 0.76 & 0.68 & 0.51 \\
    \texttt{b0\_z} & 0.82 & 0.74 & 0.64 & 0.47 & 0.84 & 0.76 & 0.68 & 0.51 \\
    \texttt{eff\_x} & 0.82 & 0.67 & 0.52 & 0.32 & 0.83 & 0.73 & 0.61 & 0.40 \\
    \texttt{cos\_j1} & 0.80 & 0.66 & 0.51 & 0.29 & 0.80 & 0.69 & 0.57 & 0.35 \\
    \texttt{cos\_j0} & 0.78 & 0.60 & 0.45 & 0.25 & 0.79 & 0.65 & 0.51 & 0.29 \\
    \texttt{cos2psi} & 0.75 & 0.58 & 0.45 & 0.29 & 0.79 & 0.66 & 0.55 & 0.37 \\
    \texttt{sin2psi} & 0.73 & 0.56 & 0.44 & 0.28 & 0.77 & 0.64 & 0.53 & 0.35 \\
    \texttt{sin\_j2} & 0.67 & 0.45 & 0.31 & 0.16 & 0.64 & 0.43 & 0.30 & 0.15 \\
    \texttt{cos\_j2} & 0.67 & 0.45 & 0.30 & 0.14 & 0.66 & 0.47 & 0.33 & 0.16 \\
    \texttt{sin\_j0} & 0.64 & 0.44 & 0.31 & 0.18 & 0.67 & 0.50 & 0.37 & 0.21 \\
    \texttt{grip\_contact} & 0.63 & 0.47 & 0.36 & 0.20 & 0.63 & 0.48 & 0.38 & 0.23 \\
    \texttt{b0\_cos4th} & 0.57 & 0.36 & 0.26 & 0.17 & 0.72 & 0.52 & 0.40 & 0.25 \\
    \texttt{b0\_sin4th} & 0.56 & 0.35 & 0.26 & 0.16 & 0.72 & 0.53 & 0.40 & 0.25 \\
    \texttt{b1\_sin4th} & 0.56 & 0.35 & 0.26 & 0.16 & 0.71 & 0.52 & 0.39 & 0.24 \\
    \texttt{b1\_cos4th} & 0.55 & 0.34 & 0.25 & 0.16 & 0.71 & 0.51 & 0.39 & 0.24 \\
    \texttt{sin\_j1} & 0.49 & 0.35 & 0.26 & 0.17 & 0.51 & 0.38 & 0.30 & 0.20 \\
    \texttt{cos\_j4} & 0.25 & 0.10 & 0.06 & 0.04 & 0.27 & 0.10 & 0.06 & 0.03 \\
    \texttt{sin\_j4} & 0.06 & 0.03 & 0.03 & 0.02 & 0.09 & 0.03 & 0.03 & 0.02 \\
\end{longtable}
\endgroup
\begingroup
\normalcolor
\scriptsize
\setlength{\tabcolsep}{3pt}
\setlength{\LTcapwidth}{\textwidth}
\begin{longtable}{lrrrrrrrr}
\caption{\rev{\textbf{Stage-1 gate values on Cube-Triple (32 dimensions).} Values are continuous gates learned with $\ell_1$ regularization and the two discrimination objectives across the displayed $\lambda$ values.}}
\label{tab:gates_cube_triple}\\
    \toprule
    \textbf{Feature} & \multicolumn{4}{c}{\textbf{$\ell_1$ + Hinge}} & \multicolumn{4}{c}{\textbf{$\ell_1$ + InfoNCE}} \\
     & $\lambda=0.01$ & $0.03$ & $0.05$ & $0.10$ & $\lambda=0.01$ & $0.03$ & $0.05$ & $0.10$ \\
    \midrule
\endfirsthead
\multicolumn{9}{c}{\tablename~\thetable{} (continued)}\\
    \toprule
    \textbf{Feature} & \multicolumn{4}{c}{\textbf{$\ell_1$ + Hinge}} & \multicolumn{4}{c}{\textbf{$\ell_1$ + InfoNCE}} \\
     & $\lambda=0.01$ & $0.03$ & $0.05$ & $0.10$ & $\lambda=0.01$ & $0.03$ & $0.05$ & $0.10$ \\
    \midrule
\endhead
\midrule
\multicolumn{9}{r}{Continued on next page}\\
\endfoot
\bottomrule
\endlastfoot
    \texttt{b0\_y} & 0.93 & 0.82 & 0.67 & 0.41 & 0.95 & 0.90 & 0.82 & 0.58 \\
    \texttt{b1\_y} & 0.93 & 0.82 & 0.66 & 0.40 & 0.95 & 0.90 & 0.82 & 0.57 \\
    \texttt{b2\_y} & 0.93 & 0.81 & 0.65 & 0.40 & 0.95 & 0.90 & 0.82 & 0.57 \\
    \texttt{b2\_x} & 0.89 & 0.69 & 0.50 & 0.30 & 0.93 & 0.85 & 0.73 & 0.46 \\
    \texttt{b0\_x} & 0.89 & 0.68 & 0.50 & 0.30 & 0.94 & 0.86 & 0.74 & 0.47 \\
    \texttt{b1\_x} & 0.89 & 0.68 & 0.49 & 0.29 & 0.94 & 0.86 & 0.73 & 0.47 \\
    \texttt{eff\_y} & 0.88 & 0.80 & 0.69 & 0.48 & 0.89 & 0.82 & 0.73 & 0.54 \\
    \texttt{eff\_z} & 0.88 & 0.84 & 0.79 & 0.65 & 0.87 & 0.84 & 0.80 & 0.69 \\
    \texttt{cos\_j3} & 0.88 & 0.83 & 0.77 & 0.61 & 0.87 & 0.84 & 0.81 & 0.70 \\
    \texttt{grip\_open} & 0.84 & 0.76 & 0.67 & 0.49 & 0.84 & 0.76 & 0.68 & 0.50 \\
    \texttt{eff\_x} & 0.83 & 0.68 & 0.53 & 0.32 & 0.85 & 0.74 & 0.62 & 0.40 \\
    \texttt{sin\_j3} & 0.83 & 0.73 & 0.61 & 0.39 & 0.84 & 0.77 & 0.69 & 0.49 \\
    \texttt{cos\_j1} & 0.81 & 0.66 & 0.52 & 0.30 & 0.80 & 0.68 & 0.55 & 0.33 \\
    \texttt{b1\_z} & 0.79 & 0.66 & 0.54 & 0.36 & 0.79 & 0.67 & 0.56 & 0.38 \\
    \texttt{b0\_z} & 0.78 & 0.64 & 0.53 & 0.35 & 0.80 & 0.69 & 0.58 & 0.40 \\
    \texttt{cos\_j0} & 0.77 & 0.57 & 0.40 & 0.21 & 0.81 & 0.67 & 0.53 & 0.30 \\
    \texttt{b2\_z} & 0.77 & 0.63 & 0.52 & 0.34 & 0.79 & 0.68 & 0.57 & 0.39 \\
    \texttt{cos2psi} & 0.75 & 0.57 & 0.44 & 0.28 & 0.81 & 0.68 & 0.56 & 0.37 \\
    \texttt{sin2psi} & 0.74 & 0.56 & 0.43 & 0.27 & 0.81 & 0.69 & 0.56 & 0.37 \\
    \texttt{cos\_j2} & 0.69 & 0.45 & 0.30 & 0.14 & 0.66 & 0.44 & 0.29 & 0.14 \\
    \texttt{sin\_j2} & 0.67 & 0.43 & 0.29 & 0.14 & 0.62 & 0.38 & 0.25 & 0.12 \\
    \texttt{sin\_j0} & 0.65 & 0.43 & 0.31 & 0.17 & 0.68 & 0.49 & 0.35 & 0.20 \\
    \texttt{grip\_contact} & 0.62 & 0.43 & 0.32 & 0.18 & 0.62 & 0.46 & 0.34 & 0.20 \\
    \texttt{b2\_sin4th} & 0.57 & 0.34 & 0.24 & 0.15 & 0.75 & 0.54 & 0.40 & 0.24 \\
    \texttt{b1\_sin4th} & 0.56 & 0.34 & 0.24 & 0.15 & 0.74 & 0.53 & 0.39 & 0.24 \\
    \texttt{b0\_cos4th} & 0.56 & 0.33 & 0.24 & 0.15 & 0.74 & 0.53 & 0.39 & 0.23 \\
    \texttt{b0\_sin4th} & 0.56 & 0.33 & 0.24 & 0.15 & 0.73 & 0.51 & 0.38 & 0.23 \\
    \texttt{b2\_cos4th} & 0.55 & 0.33 & 0.24 & 0.15 & 0.71 & 0.49 & 0.36 & 0.21 \\
    \texttt{b1\_cos4th} & 0.54 & 0.32 & 0.23 & 0.15 & 0.74 & 0.53 & 0.39 & 0.23 \\
    \texttt{sin\_j1} & 0.48 & 0.33 & 0.25 & 0.16 & 0.52 & 0.38 & 0.29 & 0.19 \\
    \texttt{cos\_j4} & 0.21 & 0.09 & 0.06 & 0.04 & 0.29 & 0.12 & 0.07 & 0.04 \\
    \texttt{sin\_j4} & 0.12 & 0.04 & 0.03 & 0.02 & 0.07 & 0.03 & 0.02 & 0.02 \\
\end{longtable}
\endgroup
\begingroup
\normalcolor
\scriptsize
\setlength{\tabcolsep}{3pt}
\setlength{\LTcapwidth}{\textwidth}
\begin{longtable}{lrrrrrrrr}
\caption{\rev{\textbf{Stage-1 gate values on Cube-Quadruple (37 dimensions).} Values are continuous gates learned with $\ell_1$ regularization and the two discrimination objectives across the displayed $\lambda$ values.}}
\label{tab:gates_cube_quadruple}\\
    \toprule
    \textbf{Feature} & \multicolumn{4}{c}{\textbf{$\ell_1$ + Hinge}} & \multicolumn{4}{c}{\textbf{$\ell_1$ + InfoNCE}} \\
     & $\lambda=0.01$ & $0.03$ & $0.05$ & $0.10$ & $\lambda=0.01$ & $0.03$ & $0.05$ & $0.10$ \\
    \midrule
\endfirsthead
\multicolumn{9}{c}{\tablename~\thetable{} (continued)}\\
    \toprule
    \textbf{Feature} & \multicolumn{4}{c}{\textbf{$\ell_1$ + Hinge}} & \multicolumn{4}{c}{\textbf{$\ell_1$ + InfoNCE}} \\
     & $\lambda=0.01$ & $0.03$ & $0.05$ & $0.10$ & $\lambda=0.01$ & $0.03$ & $0.05$ & $0.10$ \\
    \midrule
\endhead
\midrule
\multicolumn{9}{r}{Continued on next page}\\
\endfoot
\bottomrule
\endlastfoot
    \texttt{b3\_y} & 0.93 & 0.77 & 0.58 & 0.34 & 0.95 & 0.89 & 0.79 & 0.52 \\
    \texttt{b2\_y} & 0.92 & 0.76 & 0.57 & 0.33 & 0.95 & 0.89 & 0.78 & 0.51 \\
    \texttt{b0\_y} & 0.92 & 0.75 & 0.57 & 0.33 & 0.95 & 0.88 & 0.77 & 0.50 \\
    \texttt{b1\_y} & 0.92 & 0.75 & 0.57 & 0.33 & 0.95 & 0.88 & 0.77 & 0.50 \\
    \texttt{eff\_y} & 0.89 & 0.80 & 0.69 & 0.48 & 0.89 & 0.82 & 0.74 & 0.53 \\
    \texttt{b0\_x} & 0.88 & 0.62 & 0.43 & 0.25 & 0.93 & 0.82 & 0.67 & 0.41 \\
    \texttt{b2\_x} & 0.88 & 0.62 & 0.44 & 0.26 & 0.93 & 0.83 & 0.68 & 0.41 \\
    \texttt{b1\_x} & 0.88 & 0.62 & 0.44 & 0.26 & 0.93 & 0.84 & 0.69 & 0.42 \\
    \texttt{b3\_x} & 0.88 & 0.62 & 0.44 & 0.26 & 0.93 & 0.83 & 0.69 & 0.42 \\
    \texttt{cos\_j3} & 0.87 & 0.82 & 0.76 & 0.59 & 0.88 & 0.84 & 0.80 & 0.69 \\
    \texttt{eff\_z} & 0.87 & 0.82 & 0.77 & 0.62 & 0.88 & 0.84 & 0.81 & 0.70 \\
    \texttt{sin\_j3} & 0.84 & 0.74 & 0.64 & 0.42 & 0.85 & 0.79 & 0.71 & 0.53 \\
    \texttt{eff\_x} & 0.83 & 0.65 & 0.50 & 0.30 & 0.85 & 0.73 & 0.59 & 0.38 \\
    \texttt{grip\_open} & 0.82 & 0.70 & 0.58 & 0.39 & 0.85 & 0.77 & 0.68 & 0.50 \\
    \texttt{cos\_j1} & 0.81 & 0.68 & 0.54 & 0.32 & 0.81 & 0.68 & 0.54 & 0.31 \\
    \texttt{cos\_j0} & 0.78 & 0.56 & 0.39 & 0.21 & 0.80 & 0.63 & 0.47 & 0.25 \\
    \texttt{cos2psi} & 0.78 & 0.60 & 0.46 & 0.29 & 0.83 & 0.70 & 0.57 & 0.37 \\
    \texttt{sin2psi} & 0.78 & 0.59 & 0.45 & 0.29 & 0.83 & 0.69 & 0.56 & 0.37 \\
    \texttt{b2\_z} & 0.74 & 0.57 & 0.45 & 0.28 & 0.76 & 0.63 & 0.51 & 0.33 \\
    \texttt{b1\_z} & 0.72 & 0.55 & 0.43 & 0.27 & 0.76 & 0.63 & 0.52 & 0.34 \\
    \texttt{b3\_z} & 0.72 & 0.55 & 0.42 & 0.27 & 0.77 & 0.65 & 0.53 & 0.35 \\
    \texttt{b0\_z} & 0.71 & 0.54 & 0.42 & 0.26 & 0.76 & 0.63 & 0.52 & 0.34 \\
    \texttt{cos\_j2} & 0.67 & 0.41 & 0.27 & 0.13 & 0.65 & 0.42 & 0.27 & 0.12 \\
    \texttt{sin\_j0} & 0.66 & 0.44 & 0.31 & 0.18 & 0.69 & 0.48 & 0.35 & 0.19 \\
    \texttt{sin\_j2} & 0.66 & 0.40 & 0.26 & 0.13 & 0.69 & 0.46 & 0.31 & 0.15 \\
    \texttt{grip\_contact} & 0.57 & 0.37 & 0.26 & 0.15 & 0.65 & 0.48 & 0.36 & 0.21 \\
    \texttt{b1\_cos4th} & 0.53 & 0.30 & 0.21 & 0.13 & 0.72 & 0.49 & 0.35 & 0.21 \\
    \texttt{b3\_sin4th} & 0.52 & 0.29 & 0.21 & 0.12 & 0.73 & 0.50 & 0.36 & 0.22 \\
    \texttt{b0\_sin4th} & 0.52 & 0.29 & 0.21 & 0.13 & 0.74 & 0.51 & 0.37 & 0.22 \\
    \texttt{b3\_cos4th} & 0.51 & 0.29 & 0.21 & 0.13 & 0.72 & 0.48 & 0.35 & 0.20 \\
    \texttt{b2\_sin4th} & 0.51 & 0.28 & 0.20 & 0.12 & 0.73 & 0.50 & 0.36 & 0.21 \\
    \texttt{b1\_sin4th} & 0.50 & 0.28 & 0.20 & 0.12 & 0.73 & 0.51 & 0.37 & 0.22 \\
    \texttt{b0\_cos4th} & 0.50 & 0.28 & 0.21 & 0.13 & 0.70 & 0.46 & 0.32 & 0.19 \\
    \texttt{b2\_cos4th} & 0.48 & 0.27 & 0.20 & 0.12 & 0.73 & 0.50 & 0.36 & 0.22 \\
    \texttt{sin\_j1} & 0.47 & 0.31 & 0.23 & 0.14 & 0.53 & 0.38 & 0.29 & 0.18 \\
    \texttt{cos\_j4} & 0.19 & 0.08 & 0.06 & 0.04 & 0.32 & 0.12 & 0.07 & 0.04 \\
    \texttt{sin\_j4} & 0.04 & 0.03 & 0.03 & 0.02 & 0.11 & 0.04 & 0.03 & 0.02 \\
\end{longtable}
\endgroup
\begingroup
\normalcolor
\scriptsize
\setlength{\tabcolsep}{3pt}
\setlength{\LTcapwidth}{\textwidth}
\begin{longtable}{lrrrrrrrr}
\caption{\rev{\textbf{Stage-1 gate values on Scene (26 dimensions).} Values are continuous gates learned with $\ell_1$ regularization and the two discrimination objectives across the displayed $\lambda$ values.}}
\label{tab:gates_scene}\\
    \toprule
    \textbf{Feature} & \multicolumn{4}{c}{\textbf{$\ell_1$ + Hinge}} & \multicolumn{4}{c}{\textbf{$\ell_1$ + InfoNCE}} \\
     & $\lambda=0.01$ & $0.03$ & $0.05$ & $0.10$ & $\lambda=0.01$ & $0.03$ & $0.05$ & $0.10$ \\
    \midrule
\endfirsthead
\multicolumn{9}{c}{\tablename~\thetable{} (continued)}\\
    \toprule
    \textbf{Feature} & \multicolumn{4}{c}{\textbf{$\ell_1$ + Hinge}} & \multicolumn{4}{c}{\textbf{$\ell_1$ + InfoNCE}} \\
     & $\lambda=0.01$ & $0.03$ & $0.05$ & $0.10$ & $\lambda=0.01$ & $0.03$ & $0.05$ & $0.10$ \\
    \midrule
\endhead
\midrule
\multicolumn{9}{r}{Continued on next page}\\
\endfoot
\bottomrule
\endlastfoot
    \texttt{drawer} & 0.89 & 0.78 & 0.66 & 0.44 & 0.96 & 0.92 & 0.86 & 0.68 \\
    \texttt{eff\_z} & 0.88 & 0.82 & 0.76 & 0.61 & 0.87 & 0.83 & 0.78 & 0.63 \\
    \texttt{cos\_j3} & 0.87 & 0.83 & 0.77 & 0.60 & 0.87 & 0.83 & 0.78 & 0.64 \\
    \texttt{window} & 0.85 & 0.73 & 0.60 & 0.40 & 0.92 & 0.87 & 0.79 & 0.60 \\
    \texttt{sin\_j2} & 0.81 & 0.63 & 0.48 & 0.27 & 0.80 & 0.66 & 0.52 & 0.29 \\
    \texttt{eff\_y} & 0.80 & 0.63 & 0.49 & 0.31 & 0.80 & 0.66 & 0.53 & 0.35 \\
    \texttt{btn0} & 0.80 & 0.62 & 0.48 & 0.29 & 0.83 & 0.71 & 0.60 & 0.40 \\
    \texttt{grip\_open} & 0.80 & 0.70 & 0.62 & 0.44 & 0.79 & 0.70 & 0.61 & 0.45 \\
    \texttt{cos2psi} & 0.79 & 0.63 & 0.50 & 0.33 & 0.82 & 0.70 & 0.59 & 0.40 \\
    \texttt{eff\_x} & 0.79 & 0.64 & 0.49 & 0.28 & 0.78 & 0.65 & 0.53 & 0.32 \\
    \texttt{sin\_j3} & 0.79 & 0.62 & 0.48 & 0.28 & 0.80 & 0.67 & 0.55 & 0.34 \\
    \texttt{sin\_j0} & 0.79 & 0.62 & 0.48 & 0.29 & 0.80 & 0.67 & 0.54 & 0.34 \\
    \texttt{btn1} & 0.79 & 0.60 & 0.46 & 0.28 & 0.82 & 0.71 & 0.59 & 0.39 \\
    \texttt{cos\_j0} & 0.78 & 0.60 & 0.45 & 0.27 & 0.79 & 0.66 & 0.54 & 0.34 \\
    \texttt{cos\_j1} & 0.78 & 0.64 & 0.50 & 0.29 & 0.77 & 0.65 & 0.54 & 0.33 \\
    \texttt{b0\_y} & 0.75 & 0.46 & 0.31 & 0.18 & 0.78 & 0.50 & 0.35 & 0.20 \\
    \texttt{cos\_j2} & 0.69 & 0.46 & 0.31 & 0.14 & 0.69 & 0.50 & 0.35 & 0.17 \\
    \texttt{sin2psi} & 0.67 & 0.50 & 0.39 & 0.25 & 0.72 & 0.59 & 0.49 & 0.33 \\
    \texttt{sin\_j1} & 0.64 & 0.48 & 0.38 & 0.24 & 0.63 & 0.50 & 0.41 & 0.27 \\
    \texttt{sin\_j4} & 0.55 & 0.34 & 0.25 & 0.15 & 0.63 & 0.44 & 0.33 & 0.20 \\
    \texttt{cos\_j4} & 0.50 & 0.31 & 0.23 & 0.13 & 0.57 & 0.39 & 0.29 & 0.18 \\
    \texttt{b0\_x} & 0.49 & 0.26 & 0.17 & 0.10 & 0.56 & 0.31 & 0.21 & 0.12 \\
    \texttt{b0\_z} & 0.47 & 0.27 & 0.19 & 0.11 & 0.49 & 0.30 & 0.22 & 0.13 \\
    \texttt{grip\_contact} & 0.27 & 0.17 & 0.13 & 0.08 & 0.32 & 0.21 & 0.16 & 0.10 \\
    \texttt{b0\_sin4th} & 0.26 & 0.13 & 0.08 & 0.05 & 0.34 & 0.17 & 0.11 & 0.06 \\
    \texttt{b0\_cos4th} & 0.19 & 0.09 & 0.06 & 0.04 & 0.27 & 0.13 & 0.09 & 0.05 \\
\end{longtable}
\endgroup

\section{Structured-Distractor Diagnostics}
\label{app:contamination}
\begin{revision}
The contamination experiment appends 26 coordinates from unrelated Scene trajectories to Cube's 22-dimensional state. Gate stability compares the weights assigned to the \emph{same original Cube coordinates} under clean and augmented training. For $k=1,\ldots,22$, let $g_k^{\rm clean}$ and $g_k^{\rm aug}$ denote these two gate values, and define $\Delta g_k=g_k^{\rm aug}-g_k^{\rm clean}$. The mean and maximum of $|\Delta g_k|$ are taken over these 22 coordinates. We denote Spearman rank correlation by $\rho_{\mathrm S}$ and Pearson correlation by $\rho_{\mathrm P}$. The reporting sets are $\mathcal S_{\rm clean}=\{k\in\{1,\ldots,22\}:g_k^{\rm clean}\ge0.5\}$ and $\mathcal S_{\rm aug}=\{k\in\{1,\ldots,22\}:g_k^{\rm aug}\ge0.5\}$. Jaccard overlap is $|\mathcal S_{\rm clean}\cap\mathcal S_{\rm aug}|/|\mathcal S_{\rm clean}\cup\mathcal S_{\rm aug}|$. The gates remain continuous in the planner's training target.
\end{revision}
\begin{table}[!htbp]
  \normalcolor
  \centering
  \small
  \begin{tabular}{ccccccc}
    \toprule
    $\lambda$ & \textbf{Spearman $\rho_{\mathrm S}$} & \textbf{Pearson $\rho_{\mathrm P}$} & \textbf{mean $|\Delta g_k|$} & \textbf{max $|\Delta g_k|$} & \textbf{Counts} & \textbf{Jaccard} \\
    \midrule
    0.01 & 0.959 & 0.966 & 0.040 & 0.08 & 21 / 22 & 0.95 \\
    0.03 & 0.981 & 0.974 & 0.037 & 0.09 & 20 / 20 & 1.00 \\
    0.05 & 0.977 & 0.982 & 0.029 & 0.08 & 13 / 15 & 0.87 \\
    0.10 & \textbf{0.995} & 0.983 & \textbf{0.023} & 0.14 & 5 / 5 & \textbf{1.00} \\
    \bottomrule
  \end{tabular}
  \caption{\rev{\textbf{Gate stability under the Scene-coordinate augmentation.} The comparison uses the original 22 Cube coordinates, with InfoNCE and $\lVert\vg\rVert_1$ regularization. Spearman $\rho_{\mathrm S}$ and Pearson $\rho_{\mathrm P}$ measure rank and linear association, respectively. Counts give clean / augmented numbers of coordinates at or above the reporting threshold.}}
  \label{tab:stability}
\end{table}
\begin{table}[!htbp]
  \normalcolor
  \centering
  \small
  \begin{tabular}{llcc}
    \toprule
    \textbf{Alignment target} & \textbf{State} & \textbf{CEM} & \textbf{iCEM} \\
    \midrule
    none (\lewm)                          & --                     & 67.7 & 70.8 \\
    \scaleall                             & clean, 22-d            & \textbf{75.5} & 76.3 \\
    \scaleall                             & contaminated, 48-d     & 74.2 & 75.5 \\
    \method{} ($\lambda$=.05)             & contaminated, 48-d     & 75.2 & 76.7 \\
    \method{} ($\lambda$=.10)             & contaminated, 48-d     & 74.5 & \textbf{76.8} \\
    \bottomrule
  \end{tabular}
  \caption{\rev{\textbf{Downstream planning under state contamination on Cube.} CEM/iCEM success percentages from a single training seed. The reference is clean \scaleall; bold marks the largest value in each planner column.}}
  \label{tab:contam_plan}
\end{table}

\section{Experimental Details and Planning Protocol}
\label{app:planning}
\begin{revision}
\paragraph{Tasks and Evaluation.}
The state dimensions for Cube, Cube-Double, Cube-Triple, Cube-Quadruple, and Scene are 22, 27, 32, 37, and 26, respectively. Each \lewm{}-based method is trained with five seeds. Each task, configuration, and planner is evaluated for 600 episodes per training seed, distributed across six evaluation seeds with 100 episodes per seed. Table~\ref{tab:icem} reports iCEM planning success in the format of Table~\ref{tab:main}.

\paragraph{CEM Planning Protocol.}
Following the CEM setup in LeWM~\citep{maes2026leworldmodel}, each planning iteration evaluates 300 candidate action sequences and retains 30 elite sequences. CEM uses 30 iterations for Push-T and 10 iterations for other environments; the five OGBench tasks evaluated in this work therefore use 10 iterations. The planning horizon is five decision steps with frame skip 5 (25 environment steps). Model predictive control (MPC) executes the complete five-step plan before replanning. Each evaluation episode allows at most 50 environment steps. The goal is sampled from the same offline trajectory at the state 25 environment steps after the initial state.
\end{revision}

\begin{table}[!htbp]
  \normalcolor
  \centering
  \footnotesize
  \setlength{\tabcolsep}{3.5pt}
  \begin{tabular}{lrrrrrr}
    \toprule
    \textbf{Task} & \textbf{\lewm} & \textbf{\scaleall} & \textbf{SWIM-H$^\star$} & \textbf{SWIM-N$^\star$} & $\Delta_{\text{\scaleall}}$ & $\Delta_{\text{\lewm}}$ \\
    \midrule
    Cube           & 67.8{\scriptsize$\pm$1.7} & \underline{76.8}{\scriptsize$\pm$0.5} & \textbf{77.1}{\scriptsize$\pm$0.6} {\scriptsize(.03)} & \underline{76.8}{\scriptsize$\pm$0.4} {\scriptsize(.03)} & +0.2 & +9.3 \\
    Cube-Double    & 18.0{\scriptsize$\pm$1.7} & 19.5{\scriptsize$\pm$1.8} & \underline{21.6}{\scriptsize$\pm$1.4} {\scriptsize(1.0)} & \textbf{22.6}{\scriptsize$\pm$1.9} {\scriptsize(.05)} & +3.1 & +4.6 \\
    Cube-Triple    & 35.3{\scriptsize$\pm$1.5} & 36.0{\scriptsize$\pm$1.5} & \underline{36.9}{\scriptsize$\pm$1.3} {\scriptsize(.1)} & \textbf{37.1}{\scriptsize$\pm$0.9} {\scriptsize(1.0)} & +1.1 & +1.8 \\
    Cube-Quadruple & 33.9{\scriptsize$\pm$1.5} & 34.7{\scriptsize$\pm$1.0} & \underline{35.2}{\scriptsize$\pm$1.3} {\scriptsize(1.0)} & \textbf{35.7}{\scriptsize$\pm$0.2} {\scriptsize(.05)} & +1.0 & +1.8 \\
    Scene          & 24.6{\scriptsize$\pm$1.8} & 26.3{\scriptsize$\pm$1.2} & \underline{26.7}{\scriptsize$\pm$1.1} {\scriptsize(.03)} & \textbf{27.1}{\scriptsize$\pm$1.5} {\scriptsize(.01)} & +0.7 & +2.5 \\
    \bottomrule
  \end{tabular}
  \caption{\rev{\textbf{iCEM planning success (\%) over five training seeds} (mean $\pm$ standard deviation). Notation as in Table~\ref{tab:main}.}}
  \label{tab:icem}
\end{table}

\section{Ablation Studies}
\label{app:ablations}
\begin{revision}
All planning results below use a single training seed and follow the planning protocol in Appendix~\ref{app:planning}. The full models in Table~\ref{tab:loss_ablation} and all settings in Table~\ref{tab:margin_temperature} use $\lambda=0.03$ for both discrimination objectives so that they share the same gate penalty. Entries with $\pm$ report mean and standard deviation across the six evaluation seeds.
\end{revision}

\subsection{Loss-Component Ablations}
\begin{table}[!htbp]
  \normalcolor
  \centering
  \small
  \begin{tabular}{lccc}
    \toprule
    \textbf{Variant} & \shortstack{\textbf{CEM success}\\\textbf{(\%)}} & \shortstack{\textbf{iCEM success}\\\textbf{(\%)}} & \shortstack{\textbf{Dimensions}\\\textbf{with $g_k\ge0.5$}} \\
    \midrule
    Full Hinge ($\kappa=1$) & $75.5\pm4.2$ & $78.0\pm3.3$ & 15/22 \\
    Full InfoNCE ($\tau=1$) & $77.0\pm6.6$ & $77.3\pm3.6$ & 20/22 \\
    No discrimination term & $76.2\pm5.3$ & $76.8\pm4.7$ & 13/22 \\
    No NLL & $75.7\pm6.2$ & $76.7\pm3.9$ & 0/22 \\
    \bottomrule
  \end{tabular}
  \caption{\rev{\textbf{Loss-component ablations.} Success rates are percentages. The gate-count threshold is used only for reporting; gates remain continuous. Both ablated objectives retain the gate penalty $\lambda\lVert\vg\rVert_1$.}}
  \label{tab:loss_ablation}
\end{table}
\begin{revision}
The four SWIM variants have similar success rates. Removing the NLL leaves no gate at or above the reporting threshold, whereas variants retaining the NLL leave 13--20 dimensions at or above threshold. For Hinge, removing the discrimination term reduces the number of retained dimensions from 15 to 13. These results suggest that the NLL supports nontrivial gates and that the discrimination term affects the learned gate profile.
\end{revision}

\subsection{Gate Penalty and Discrimination Objective}
The $\ell_1$ penalty is $\lVert\vg\rVert_1$; the squared $\ell_2$
penalty is $\lVert\vg\rVert_2^2=\sum_k g_k^2$.
The latter replaces the $\ell_1$ penalty in
Equation~\eqref{eq:swim_stage1} for the corresponding ablations.

\begin{table}[!htbp]
  \normalcolor
  \centering
  \small
  \setlength{\tabcolsep}{5pt}
  \begin{tabular}{llcc}
    \toprule
    \textbf{Penalty + objective} & $\lambda$ & \textbf{CEM (\%)} & \textbf{iCEM (\%)} \\
    \midrule
    \lewm{} baseline & -- & 67.7 & 70.8 \\
    SCALE (9 selected dimensions) & -- & 73.5 & 76.7 \\
    \midrule
    \multirow{4}{*}{$\ell_1$ + Hinge} & 0.01 & 76.0 & 77.5 \\
      & 0.03 & 75.5 & 78.0 \\
      & 0.05 & 75.2 & 75.5 \\
      & 0.10 & 76.2 & 75.2 \\
    \midrule
    \multirow{4}{*}{$\ell_2^2$ + Hinge} & 0.01 & 74.5 & 76.0 \\
      & 0.03 & 74.5 & 77.0 \\
      & 0.05 & 75.7 & 76.3 \\
      & 0.10 & 75.3 & 75.8 \\
    \midrule
    \multirow{4}{*}{$\ell_1$ + InfoNCE} & 0.01 & 74.8 & 76.2 \\
      & 0.03 & 77.0 & 77.3 \\
      & 0.05 & 76.0 & 78.0 \\
      & 0.10 & 75.7 & 76.8 \\
    \midrule
    \multirow{4}{*}{$\ell_2^2$ + InfoNCE} & 0.01 & 76.3 & 76.5 \\
      & 0.03 & 75.0 & 77.0 \\
      & 0.05 & 76.0 & 75.8 \\
      & 0.10 & 75.8 & 76.7 \\
    \bottomrule
  \end{tabular}
  \caption{\rev{\textbf{Regularization--objective comparison.} All entries are planning success percentages under the planning protocol in Appendix~\ref{app:planning}.}}
  \label{tab:penalty_objective}
\end{table}
\begin{revision}
On Cube, all 16 SWIM configurations exceed SCALE with nine manually selected state dimensions under CEM (74.5--77.0\% vs.\ 73.5\%) and are comparable under iCEM (75.2--78.0\% vs.\ 76.7\%), without requiring the manual selection. The Spearman rank correlation between $\ell_1$- and squared-$\ell_2$-regularized gates is 0.97 at $\lambda=0.01$ and 0.99 at $\lambda=0.1$. At $\lambda=0.3$ and 1.0, all squared-$\ell_2$ gates fall below 0.5 without approaching zero, so the squared penalty preserves graded weights but does not produce the same threshold-based feature selection as $\ell_1$. The $\ell_2^2$+Hinge, $\ell_1$+InfoNCE, and $\ell_2^2$+InfoNCE variants share the retained set $\{\texttt{eff\_z},\texttt{cos\_j3},\texttt{sin\_j3},\texttt{block\_z},\texttt{cos\_j1}\}$. $\ell_1$+Hinge is the exception, retaining \texttt{grip\_open} and \texttt{block\_y} among its selected dimensions.
\end{revision}

\subsection{Hinge-Margin and InfoNCE-Temperature Sensitivity}
\begin{table}[!htbp]
  \normalcolor
  \centering
  \small
  \begin{tabular}{lcc}
    \toprule
    \textbf{Setting} & \textbf{CEM success (\%)} & \textbf{iCEM success (\%)} \\
    \midrule
    Hinge, $\kappa=0.3$ & $74.7\pm4.3$ & $77.3\pm3.1$ \\
    Hinge, $\kappa=1$ (default) & $75.5\pm4.2$ & $78.0\pm3.3$ \\
    Hinge, $\kappa=3$ & $73.5\pm2.7$ & $77.0\pm3.6$ \\
    Hinge, $\kappa=10$ & $75.3\pm5.1$ & $76.8\pm4.0$ \\
    \midrule
    InfoNCE, $\tau=0.3$ & $75.0\pm5.6$ & $77.7\pm3.7$ \\
    InfoNCE, $\tau=1$ (default) & $77.0\pm6.6$ & $77.3\pm3.6$ \\
    InfoNCE, $\tau=3$ & $75.7\pm4.2$ & $77.5\pm3.9$ \\
    InfoNCE, $\tau=10$ & $72.8\pm4.0$ & $76.8\pm4.2$ \\
    InfoNCE, $\tau=30$ & $73.8\pm4.7$ & $76.2\pm4.2$ \\
    InfoNCE, $\tau=100$ & $75.5\pm4.3$ & $76.8\pm4.6$ \\
    InfoNCE, $\tau=300$ & $73.8\pm4.2$ & $77.0\pm4.6$ \\
    \bottomrule
  \end{tabular}
  \caption{\rev{\textbf{Margin and temperature sensitivity.} Values report planning success percentages as mean $\pm$ standard deviation over six evaluation seeds.}}
  \label{tab:margin_temperature}
\end{table}
\begin{revision}
Hinge success changes modestly over the tested margins. For InfoNCE, $\tau=1$ has the highest mean across CEM and iCEM among the tested temperatures; larger temperatures do not improve CEM success. We therefore use $\kappa=1$ and $\tau=1$ as defaults.
\end{revision}

\subsection{Data Efficiency of Gate Learning}
\begin{revision}
For each subset size, we sample three subsets of complete trajectories from the 2,000-trajectory dataset. Selected dimensions are coordinates with $g_k\ge0.5$; their reported count is averaged over the three subsets. We use the gates learned from all 2,000 trajectories as the reference for the Jaccard and Spearman comparisons.
\end{revision}
\begin{table}[H]
  \normalcolor
  \centering
  \small
  \begin{tabular}{rcccc}
    \toprule
    \textbf{Trajectories} & \multicolumn{3}{c}{$\lambda=0.1$} & $\lambda=0.01$ \\
    \cmidrule(lr){2-4}\cmidrule(lr){5-5}
    & \textbf{Jaccard} & \textbf{Spearman} & \textbf{Selected dims} & \textbf{Spearman} \\
    \midrule
    25 & 0.40 & 0.69 & 13.0 & 0.03 \\
    50 & 0.45 & 0.71 & 11.0 & 0.08 \\
    100 & 0.65 & 0.79 & 6.0 & 0.30 \\
    250 & 0.80 & 0.81 & 4.0 & 0.61 \\
    500 & 0.80 & 0.86 & 4.0 & 0.81 \\
    1000 & 0.87 & 0.94 & 4.3 & 0.93 \\
    2000 (all) & 1.00 & 0.99 & 5.0 & 0.98 \\
    \bottomrule
  \end{tabular}
  \caption{\rev{\textbf{Gate stability versus offline-data size.} Hinge with $\kappa=1$. Values are means over three trajectory-subset draws. The full-data gates from 2,000 trajectories are the reference for Jaccard and Spearman calculations.}}
  \label{tab:data_efficiency}
\end{table}
\begin{revision}
With $\lambda=0.1$, the selected-dimension set reaches 0.80 Jaccard overlap with the full-data reference using 250--500 trajectories, and the full-gate ranking reaches Spearman 0.94 at 1,000 trajectories. With weaker sparsity ($\lambda=0.01$), ranking is less stable at small sample sizes (Spearman 0.81 at 500 trajectories). Small subsets also retain more dimensions: 25 trajectories retain 13 on average, compared with 5 using all 2,000.
\end{revision}

\begingroup
\color{black}
\allowdisplaybreaks[1]
\section{State-Weight Comparison and Planning Proofs}
\label{app:swim-weighted-planning-proof}
We use the normalized learned weights
$\vg=\vg^*/\lVert\vg^*\rVert_1$ and reference weights
$\vg^0\ge0$, with $\sum_k g_k^0=1$, on the same Stage-2 standardized
state coordinates.
The notation $\vg$ denotes normalized weights in this appendix and
the planning analysis in Section~\ref{sec:theory}; elsewhere, it retains
its Stage-1 meaning as the sigmoid gates.
Normalization preserves candidate ordering and, for nonconstant
distance vectors, leaves ideal Pearson correlation unchanged.
The subscript of $D^2$ specifies the weights in the sum in
Equation~\eqref{eq:swim_distance}, replacing $\vg^*$ as appropriate.
Fix a start--goal pair and $M\ge2$ candidate action sequences, and write
$\vq_H^{(m)}\in\R^d$, with $d\ge1$, for candidate $m$'s actual terminal
state and $\vq_g$ for the goal state.

The following proposition isolates the effect of the state metric under
exact state-based candidate selection. These exact state-metric minimizers
are analytical benchmarks; the deployed planner searches using the
predicted latent cost $\widehat J_m$.
The comparison relates deviations from the segment joining uniform and
reference weights to the reduction in learned-metric cost.

\begin{proposition}[Comparison with uniform state weighting]
\label{prop:swim-weight-advantage}
Let $\vg^{\mathrm u}=(1/d,\ldots,1/d)$ denote uniform weights and set
$R=\max_{m,k}|q_{H,k}^{(m)}-q_{g,k}|$.
Choose any
$m_g\in\arg\min_m D_{\vg}^2(\vq_H^{(m)},\vq_g)$ and
$m_{\mathrm u}\in\arg\min_m D_{\vg^{\mathrm u}}^2(\vq_H^{(m)},\vq_g)$.
For $0<\beta\le1$, let $\varepsilon_\beta
=\lVert\vg-(1-\beta)\vg^{\mathrm u}-\beta\vg^0\rVert_1,\
\Delta_g
=D_{\vg}^2(\vq_H^{(m_{\mathrm u})},\vq_g)
-D_{\vg}^2(\vq_H^{(m_g)},\vq_g)
\ge0.$
The condition $R^2\varepsilon_\beta\le\Delta_g$ for at least one
$\beta\in(0,1]$ implies
\begin{equation}
D_{\vg^0}^2(\vq_H^{(m_g)},\vq_g)
\le D_{\vg^0}^2(\vq_H^{(m_{\mathrm u})},\vq_g),
\label{eq:swim-weight-advantage}
\end{equation}
with strict improvement whenever $R^2\varepsilon_\beta<\Delta_g$.
Moreover, every $\beta\in(0,1]$ satisfies
\begin{equation}
\lVert\vg-\vg^0\rVert_1
\le(1-\beta)\lVert\vg^{\mathrm u}-\vg^0\rVert_1+\varepsilon_\beta.
\label{eq:swim-weight-contraction}
\end{equation}
\end{proposition}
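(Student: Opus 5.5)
The plan is to exploit that $D^2$ is linear in the weight vector. For each candidate $m$, write $\mathbf e_m\in\R^d$ with $e_{m,k}=(q_{H,k}^{(m)}-q_{g,k})^2$. Then $D_{\mathbf w}^2(\vq_H^{(m)},\vq_g)=\langle\mathbf w,\mathbf e_m\rangle$ and $0\le e_{m,k}\le R^2$ for every $m,k$. For any two weight vectors $\mathbf w,\mathbf w'$ and any two candidates $m,m'$, Hölder's inequality gives
\[
\bigl|\langle\mathbf w-\mathbf w',\mathbf e_m-\mathbf e_{m'}\rangle\bigr|\le\lVert\mathbf w-\mathbf w'\rVert_1\,\lVert\mathbf e_m-\mathbf e_{m'}\rVert_\infty\le R^2\lVert\mathbf w-\mathbf w'\rVert_1 .
\]
The bound $R^2$ (rather than $2R^2$) holds because both $e_{m,k}$ and $e_{m',k}$ lie in $[0,R^2]$. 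This single estimate drives the whole proof.

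To prove \eqref{eq:swim-weight-advantage}, fix $\beta\in(0,1]$ and set $\mathbf w_\beta=(1-\beta)\vg^{\mathrm u}+\beta\vg^0$. Let $\mathbf v=\mathbf e_{m_g}-\mathbf e_{m_{\mathrm u}}$; the target is $\langle\vg^0,\mathbf v\rangle\le0$.

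\begin{enumerate}
\item By definition of $\Delta_g$, we have $\langle\vg,\mathbf v\rangle=-\Delta_g$.
\item By the Hölder estimate, $\langle\mathbf w_\beta,\mathbf v\rangle\le\langle\vg,\mathbf v\rangle+R^2\varepsilon_\beta=-\Delta_g+R^2\varepsilon_\beta\le0$, with strict inequality when $R^2\varepsilon_\beta<\Delta_g$.
\item Expanding gives $\langle\mathbf w_\beta,\mathbf v\rangle=(1-\beta)\langle\vg^{\mathrm u},\mathbf v\rangle+\beta\langle\vg^0,\mathbf v\rangle$.
\item Since $m_{\mathrm u}$ minimizes the uniform metric, $\langle\vg^{\mathrm u},\mathbf v\rangle\ge0$, and this term enters with coefficient $1-\beta\ge0$. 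Hence $\beta\langle\vg^0,\mathbf v\rangle\le\langle\mathbf w_\beta,\mathbf v\rangle\le0$.
\item Dividing by $\beta>0$ yields $\langle\vg^0,\mathbf v\rangle\le0$, and the strict case carries through.
\end{enumerate}

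The main point of care is the sign bookkeeping in step 4: the uniform-optimality of $m_{\mathrm u}$ must be used to discard the $\vg^{\mathrm u}$ part rather than bound it. This is exactly why $\beta>0$ is needed. The nonnegativity $\Delta_g\ge0$ follows directly from $m_g$ minimizing the $\vg$-metric.

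Inequality \eqref{eq:swim-weight-contraction} follows from the triangle inequality:
\[
\lVert\vg-\vg^0\rVert_1\le\lVert\vg-\mathbf w_\beta\rVert_1+\lVert\mathbf w_\beta-\vg^0\rVert_1 .
\]
The first term equals $\varepsilon_\beta$. For the second, note that $\mathbf w_\beta-\vg^0=(1-\beta)(\vg^{\mathrm u}-\vg^0)$, so it equals $(1-\beta)\lVert\vg^{\mathrm u}-\vg^0\rVert_1$.
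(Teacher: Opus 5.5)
Your proposal is correct and follows essentially the same route as the paper. Your H\"older estimate is exactly the paper's perturbation lemma, applied with $(1-\beta)\vg^{\mathrm u}+\beta\vg^0$ in place of the reference weights; the paper then likewise expands by linearity in the weights, drops the nonnegative uniform-metric term using the optimality of $m_{\mathrm u}$, divides by $\beta$, and obtains the contraction inequality from the same triangle-inequality decomposition.
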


Uniform weights give the same candidate ordering as the unweighted state
distance used by SCALE. A reference metric supported on task-relevant
coordinates assigns zero weight to nuisance coordinates.
The parameter $\beta$ indexes a linear interpolation of the uniform and
reference weights, and $\varepsilon_\beta$ measures deviation from this
segment. The learned gates need not lie on the segment, but the sufficient
condition still restricts their deviation relative to the selection gain.
Zero deviation recovers the exact interpolation case.
At $\beta=1$, the sufficient condition becomes
$R^2\lVert\vg-\vg^0\rVert_1\le\Delta_g$.
The comparison uses exact state-metric minimization; Stage~1 is not
assumed to satisfy its sufficient condition, consistent with the gate
interpretation in Appendix~\ref{app:metric}.

\begin{remark}[Comparison with SCALE-All]
\label{rem:swim-scale-bound}
Theorem~\ref{thm:swim-weighted-planning} provides a common basis for
comparing \method{} and \scaleall{} when its conditions hold for both
models on the same candidates, standardized coordinates, and reference metric.
Let $\vg^{\mathrm u}=(1/d,\ldots,1/d)$ and
$\varepsilon_\beta=\lVert\vg-(1-\beta)\vg^{\mathrm u}-\beta\vg^0\rVert_1$
for $0<\beta\le1$.
By Proposition~\ref{prop:swim-weight-advantage}, the reduction in the
weight-discrepancy term from \scaleall{} to \method{} satisfies
\begin{equation}
R^2\bigl(\lVert\vg^{\mathrm u}-\vg^0\rVert_1
-\lVert\vg-\vg^0\rVert_1\bigr)
\ge R^2\bigl(\beta\lVert\vg^{\mathrm u}-\vg^0\rVert_1-\varepsilon_\beta\bigr).
\label{eq:swim-bound-improvement}
\end{equation}
The \method{} total upper bound is strictly smaller whenever the
right-hand side is positive and exceeds any increase in the combined
calibration and rollout terms.
\end{remark}

We first establish a weight-perturbation bound and prove the proposition.
We then control latent rollout error and prove
Theorem~\ref{thm:swim-weighted-planning}.

The first lemma controls the change in a candidate cost difference when
the learned weights are replaced by the reference weights.

\begin{lemma}[Perturbation of a candidate cost difference]
\label{lem:swim-weight-perturbation}
For candidates $i,j\in\{1,\ldots,M\}$, suppose
$|q_{H,k}^{(i)}-q_{g,k}|\le R$ and
$|q_{H,k}^{(j)}-q_{g,k}|\le R$ for all $k$. Then
\begin{align*}
\Big|&\{D_{\vg^0}^2(\vq_H^{(i)},\vq_g)-D_{\vg^0}^2(\vq_H^{(j)},\vq_g)\}\\
&-\{D_{\vg}^2(\vq_H^{(i)},\vq_g)-D_{\vg}^2(\vq_H^{(j)},\vq_g)\}\Big|
\le R^2\lVert \vg-\vg^0\rVert_1.
\end{align*}
\end{lemma}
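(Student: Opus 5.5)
The plan is to write each weighted squared distance as a linear functional of the weight vector and then apply Hölder's inequality in the $\ell_1$--$\ell_\infty$ pairing. For a candidate $m$, set $u_k^{(m)}=(q_{H,k}^{(m)}-q_{g,k})^2$. By Equation~\eqref{eq:swim_distance}, with the weights replaced as indicated by the subscript, we have $D_{\vg}^2(\vq_H^{(m)},\vq_g)=\sum_k g_k u_k^{(m)}$. The same expression holds for $\vg^0$. Both maps are linear in the weights.

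Subtracting the two candidate cost differences therefore gives
\begin{equation*}
\{D_{\vg^0}^2(\vq_H^{(i)},\vq_g)-D_{\vg^0}^2(\vq_H^{(j)},\vq_g)\}-\{D_{\vg}^2(\vq_H^{(i)},\vq_g)-D_{\vg}^2(\vq_H^{(j)},\vq_g)\}
=\sum_{k=1}^d (g_k^0-g_k)\bigl(u_k^{(i)}-u_k^{(j)}\bigr).
\end{equation*}
The triangle inequality bounds the absolute value of the right-hand side by $\sum_k |g_k^0-g_k|\,|u_k^{(i)}-u_k^{(j)}|$. This is at most $\lVert\vg-\vg^0\rVert_1\max_k|u_k^{(i)}-u_k^{(j)}|$.

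The only point that needs care is bounding the coordinatewise difference $|u_k^{(i)}-u_k^{(j)}|$ by $R^2$ rather than $2R^2$. It is tempting to apply the triangle inequality again, but that would lose the factor of two. Instead, I would use the fact that $u_k^{(i)}$ and $u_k^{(j)}$ both lie in $[0,R^2]$, which follows from the hypothesis $|q_{H,k}^{(\cdot)}-q_{g,k}|\le R$. The difference of two numbers in an interval of length $R^2$ has absolute value at most $R^2$. Substituting this into the previous bound gives $R^2\lVert\vg-\vg^0\rVert_1$, which completes the argument.

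Note that no normalization of the weights and no nonnegativity of $\vg-\vg^0$ are needed; nonnegativity of the squared entries is the only structural input.
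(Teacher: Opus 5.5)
Your proof is correct and follows the paper's argument step for step. You expand the difference into $\sum_k (g_k^0-g_k)\{(q_{H,k}^{(i)}-q_{g,k})^2-(q_{H,k}^{(j)}-q_{g,k})^2\}$, use that both squared terms lie in $[0,R^2]$ so their difference is at most $R^2$ in absolute value, and finish with the triangle inequality. Your remark that the argument needs neither normalization of the weights nor a sign condition on $\vg-\vg^0$ is also accurate.
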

\begin{proof}
Expand both cost differences and collect the terms involving the weight
change. The expression inside the absolute value becomes
\[
\sum_{k=1}^d(g_k^0-g_k)
\bigl\{(q_{H,k}^{(i)}-q_{g,k})^2
-(q_{H,k}^{(j)}-q_{g,k})^2\bigr\}.
\]
The coordinate bounds place both squared differences in $[0,R^2]$.
Their difference therefore has absolute value at most $R^2$.
Taking the absolute value of the sum and applying the triangle inequality
bounds it by
\[
R^2\sum_{k=1}^d|g_k^0-g_k|
=R^2\lVert\vg-\vg^0\rVert_1,
\]
which proves the stated inequality.
\end{proof}

\begin{proof}[Proof of Proposition~\ref{prop:swim-weight-advantage}]
Fix any minimizers $m_g$ and $m_{\mathrm u}$ and any $\beta\in(0,1]$.
The minimizing property of $m_g$ gives $\Delta_g\ge0$.
For this proof, write $\vg_\beta=(1-\beta)\vg^{\mathrm u}+\beta\vg^0$ for the
comparison weights, so that $\varepsilon_\beta=\lVert\vg-\vg_\beta\rVert_1$.
Applying Lemma~\ref{lem:swim-weight-perturbation} to the two selected
candidates, with $\vg_\beta$ in place of the reference weights, gives
\begin{align*}
&D_{\vg_\beta}^2(\vq_H^{(m_g)},\vq_g)
-D_{\vg_\beta}^2(\vq_H^{(m_{\mathrm u})},\vq_g)\\
&\quad\le D_{\vg}^2(\vq_H^{(m_g)},\vq_g)
-D_{\vg}^2(\vq_H^{(m_{\mathrm u})},\vq_g)
+R^2\varepsilon_\beta
=R^2\varepsilon_\beta-\Delta_g.
\end{align*}
The lemma's coordinate bounds follow from the definition of $R$.
Linearity in the weights expresses the left-hand side as
\begin{align*}
&(1-\beta)\bigl\{D_{\vg^{\mathrm u}}^2(\vq_H^{(m_g)},\vq_g)
-D_{\vg^{\mathrm u}}^2(\vq_H^{(m_{\mathrm u})},\vq_g)\bigr\}\\
&\quad+\beta\bigl\{D_{\vg^0}^2(\vq_H^{(m_g)},\vq_g)
-D_{\vg^0}^2(\vq_H^{(m_{\mathrm u})},\vq_g)\bigr\}.
\end{align*}
The first term is nonnegative because $m_{\mathrm u}$ minimizes the
uniform-metric cost and $1-\beta\ge0$. Dropping this term and dividing by
$\beta>0$ therefore yields
\begin{align*}
&D_{\vg^0}^2(\vq_H^{(m_g)},\vq_g)
-D_{\vg^0}^2(\vq_H^{(m_{\mathrm u})},\vq_g)\\
&\qquad\le\frac{R^2\varepsilon_\beta-\Delta_g}{\beta}.
\end{align*}
The condition $R^2\varepsilon_\beta\le\Delta_g$ makes the right-hand side
nonpositive, proving Equation~\eqref{eq:swim-weight-advantage}.
Strict inequality makes it negative and gives strict improvement.
The argument holds for every choice among tied minimizers.

Finally, decompose the weight error through the comparison weights:
\[
\vg-\vg^0=(\vg-\vg_\beta)+(1-\beta)(\vg^{\mathrm u}-\vg^0).
\]
The triangle inequality and $1-\beta\ge0$ give
\[
\lVert\vg-\vg^0\rVert_1
\le\lVert\vg-\vg_\beta\rVert_1
+(1-\beta)\lVert\vg^{\mathrm u}-\vg^0\rVert_1,
\]
which is Equation~\eqref{eq:swim-weight-contraction}.
\end{proof}

The next lemma converts an error in the terminal embedding into an error
in the squared planning cost.

\begin{lemma}[Rollout error in the planning cost]
\label{lem:swim-rollout-cost}
For every candidate $m$,
\[
|\widehat J_m-J_m^{\mathrm{true}}|
\le 2C_zC_{\mathrm{roll}}+C_{\mathrm{roll}}^2.
\]
\end{lemma}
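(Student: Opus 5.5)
The bound follows from a difference-of-squares identity combined with the triangle inequality. Write $\mathbf u=\vz_H^{(m)}-\vz_g$ and $\mathbf e=\widehat\vz_H^{(m)}-\vz_H^{(m)}$, so that $\widehat\vz_H^{(m)}-\vz_g=\mathbf u+\mathbf e$. Expanding the squared norm gives
\[
\widehat J_m-J_m^{\mathrm{true}}=\lVert\mathbf u+\mathbf e\rVert_2^2-\lVert\mathbf u\rVert_2^2=2\,\mathbf u^\top\mathbf e+\lVert\mathbf e\rVert_2^2 .
\]
This removes the unknown latent geometry and leaves only an inner product and a squared norm.

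Next we take absolute values. By the triangle inequality and Cauchy--Schwarz,
\[
|\widehat J_m-J_m^{\mathrm{true}}|\le 2\lVert\mathbf u\rVert_2\lVert\mathbf e\rVert_2+\lVert\mathbf e\rVert_2^2 .
\]
By definition of the maxima in Theorem~\ref{thm:swim-weighted-planning}, $\lVert\mathbf u\rVert_2\le C_z$ and $\lVert\mathbf e\rVert_2\le C_{\mathrm{roll}}$ hold for every candidate $m$. Both terms are nonnegative and monotone in these norms, so substituting the bounds yields $2C_zC_{\mathrm{roll}}+C_{\mathrm{roll}}^2$ uniformly in $m$.

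No step is a genuine obstacle. The one point to handle carefully is the choice of anchor for the expansion. We expand around the actual embedding $\vz_H^{(m)}$, not the predicted one, so that the cross term carries $C_z$, which is defined from actual terminal embeddings. Expanding around $\widehat\vz_H^{(m)}$ would instead give a bound involving the predicted distance and would not match the stated constants.
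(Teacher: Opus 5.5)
Your proof is correct. It reaches the same intermediate bound $\lVert\mathbf e\rVert_2\bigl(2\lVert\mathbf u\rVert_2+\lVert\mathbf e\rVert_2\bigr)$ as the paper, but by a slightly different route.

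The paper never expands the square. It factors $\widehat J_m-J_m^{\mathrm{true}}$ as a difference of squares of the two norms. It then bounds the difference of norms by $C_{\mathrm{roll}}$ using the reverse triangle inequality, and bounds their sum by $C_{\mathrm{roll}}+2C_z$ using the triangle inequality. That second step is where the paper anchors on the actual embedding $\vz_H^{(m)}$, just as you do.

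Your expansion $2\,\mathbf u^\top\mathbf e+\lVert\mathbf e\rVert_2^2$ followed by Cauchy--Schwarz relies on the Euclidean inner product. The paper's argument uses only the triangle inequality, so it would carry over verbatim to any norm in the planning cost. In exchange, your exact identity isolates the cross term $2\,\mathbf u^\top\mathbf e$. This shows the stated bound is loose when the rollout error is nearly orthogonal to $\vz_H^{(m)}-\vz_g$, since the discrepancy is then only about $\lVert\mathbf e\rVert_2^2$.

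For the Euclidean planning cost used here, the two arguments are interchangeable.
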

\begin{proof}
Fix a candidate $m$. Factoring the difference of squared norms reduces the
claim to bounding the difference and the sum of the two norms.
The reverse triangle inequality and the definition of $C_{\mathrm{roll}}$
give
\[
\big|\lVert\widehat\vz_H^{(m)}-\vz_g\rVert_2
-\lVert\vz_H^{(m)}-\vz_g\rVert_2\big|
\le\lVert\widehat\vz_H^{(m)}-\vz_H^{(m)}\rVert_2
\le C_{\mathrm{roll}}.
\]
For their sum, the triangle inequality gives
\begin{align*}
&\lVert\widehat\vz_H^{(m)}-\vz_g\rVert_2
+\lVert\vz_H^{(m)}-\vz_g\rVert_2\\
&\qquad\le\lVert\widehat\vz_H^{(m)}-\vz_H^{(m)}\rVert_2
+2\lVert\vz_H^{(m)}-\vz_g\rVert_2
\le C_{\mathrm{roll}}+2C_z.
\end{align*}
Multiplying these bounds in the difference-of-squares factorization yields
\[
|\widehat J_m-J_m^{\mathrm{true}}|
\le C_{\mathrm{roll}}(2C_z+C_{\mathrm{roll}})
=2C_zC_{\mathrm{roll}}+C_{\mathrm{roll}}^2.
\]
This holds for every candidate $m$.
\end{proof}

\begin{proof}[Proof of Theorem~\ref{thm:swim-weighted-planning}]
Fix any $\widehat m\in\arg\min_m\widehat J_m$ and choose
$m^0\in\arg\min_mD_{\vg^0}^2(\vq_H^{(m)},\vq_g)$.
Both minimizers exist because the candidate set is finite and nonempty.
We first bound the difference between their learned-metric costs, then
use Lemma~\ref{lem:swim-weight-perturbation} to return to the reference metric.

\par\smallskip
\noindent\textbf{From correlation to cost differences.}
Let $\bar J$ and $\bar Q$ be the empirical means of
$(J_m^{\mathrm{true}})_{m=1}^M$ and $(Q_m)_{m=1}^M$, respectively.
Since $\sigma_z,\sigma_q>0$, we can define
\[
u_m=\frac{J_m^{\mathrm{true}}-\bar J}{\sigma_z},\qquad
v_m=\frac{Q_m-\bar Q}{\sigma_q},\qquad
e_m=v_m-u_m.
\]
The empirical standard deviations use divisor $M$, so their definitions
and the definition of Pearson correlation imply
\[
\sum_{m=1}^M u_m^2=\sum_{m=1}^M v_m^2=M,
\qquad
\sum_{m=1}^M u_mv_m=M\rho_{\mathrm P}.
\]
Expanding the squared residual and substituting these identities gives
\begin{align*}
\sum_{m=1}^M e_m^2
&=\sum_{m=1}^M v_m^2+\sum_{m=1}^M u_m^2
-2\sum_{m=1}^M u_mv_m\\
&=2M(1-\rho_{\mathrm P}).
\end{align*}
In particular, $M(1-\rho_{\mathrm P})\ge0$. For distinct indices $i,j$,
$(e_i-e_j)^2\le2(e_i^2+e_j^2)$, and the remaining squared residuals
are nonnegative. Hence
\[
(e_i-e_j)^2
\le2\sum_{m=1}^M e_m^2
=4M(1-\rho_{\mathrm P}).
\]
Taking square roots gives
$|e_i-e_j|\le2\sqrt{M(1-\rho_{\mathrm P})}$.
When $i=j$, the left-hand side is zero, so this bound holds for every pair.
The centering terms cancel in cost differences; substituting the definitions
of $u_m,v_m,e_m$ therefore gives the exact identity
\[
Q_{\widehat m}-Q_{m^0}
=\frac{\sigma_q}{\sigma_z}
 (J_{\widehat m}^{\mathrm{true}}-J_{m^0}^{\mathrm{true}})
+\sigma_q(e_{\widehat m}-e_{m^0}).
\]
This identity separates the true latent-cost difference from the
calibration residual, which the preceding bound controls.

\par\smallskip
\noindent\textbf{Cost of the selected candidate.}
To bound the true latent-cost difference, insert the predicted costs used
by the planner:
\begin{align*}
J_{\widehat m}^{\mathrm{true}}-J_{m^0}^{\mathrm{true}}
&=(J_{\widehat m}^{\mathrm{true}}-\widehat J_{\widehat m})
+(\widehat J_{\widehat m}-\widehat J_{m^0})\\
&\quad+(\widehat J_{m^0}-J_{m^0}^{\mathrm{true}})\\
&\le|J_{\widehat m}^{\mathrm{true}}-\widehat J_{\widehat m}|
+|\widehat J_{m^0}-J_{m^0}^{\mathrm{true}}|\\
&\le2(2C_zC_{\mathrm{roll}}+C_{\mathrm{roll}}^2).
\end{align*}
The first inequality uses $\widehat J_{\widehat m}\le\widehat J_{m^0}$;
the second applies Lemma~\ref{lem:swim-rollout-cost} to the two candidates.
Substituting this bound and the residual bound into the preceding cost
identity, and using $\sigma_q>0$ and $\sigma_q/\sigma_z>0$, yields
\begin{align*}
Q_{\widehat m}-Q_{m^0}
&\le2\sigma_q\sqrt{M(1-\rho_{\mathrm P})}\\
&\quad+2\frac{\sigma_q}{\sigma_z}
 (2C_zC_{\mathrm{roll}}+C_{\mathrm{roll}}^2).
\end{align*}

\par\smallskip
\noindent\textbf{Returning to the reference metric.}
The definition of $R$ supplies the coordinate bounds required by
Lemma~\ref{lem:swim-weight-perturbation}. Applying that lemma with
$i=\widehat m$ and $j=m^0$, and using the definition of $Q_m$, gives
\begin{align*}
&D_{\vg^0}^2(\vq_H^{(\widehat m)},\vq_g)
-D_{\vg^0}^2(\vq_H^{(m^0)},\vq_g)\\
&\qquad\le Q_{\widehat m}-Q_{m^0}
+R^2\lVert\vg-\vg^0\rVert_1.
\end{align*}
Substituting the learned-metric bound proves the upper inequality in
Equation~\eqref{eq:swim-weighted-planning}. The minimizing property of
$m^0$ gives the lower inequality. Since $\widehat m$ was any minimizer of
the predicted costs, the conclusion also covers arbitrary ties.
\end{proof}

\par
\endgroup

\end{document}